\newtheorem{claim}[theorem]{Claim}
\newtheorem{lem}[theorem]{Lemma}
\newtheorem{prop}[theorem]{Proposition}
\newtheorem{cor}[theorem]{Corollary}
\newcommand{\err}{\mathrm{err}}
\newcommand{\emperr}{\widehat{\err}}
\newcommand{\pr}[2]{\underset{#1}{\mathbb{P}}\left[ #2 \right]}
\newcommand{\ex}[2]{\underset{#1}{\mathbb{E}}\left[ #2 \right]}
\newcommand{\eps}{\varepsilon}
\newcommand{\tvd}{\mathrm{d_{TV}}}
\newcommand{\cX}{\mathcal{X}}
\newcommand{\cY}{\mathcal{Y}}
\newcommand{\cU}{\mathcal{U}}
\newcommand{\cZ}{\mathcal{Z}}
\newcommand{\cD}{\mathcal{D}}
\newcommand{\cB}{\mathcal{B}}
\newcommand{\cO}{\mathcal{O}}
\newcommand{\rA}{\rightarrow}
\newcommand{\supp}{\mathsf{Supp}}
\begin{document}
	
\global\long\def\kl{\mathsf{KL}}
\global\long\def\aa{\mathcal{\alpha}}
\global\long\def\cA{\mathcal{A}}
\global\long\def\bb{\mathcal{B}}
\global\long\def\cc{\mathcal{C}}
\global\long\def\DD{\mathcal{D}}
\global\long\def\ff{\mathcal{F}}

\global\long\def\hh{\mathcal{H}}
\global\long\def\ii{\mathcal{I}}
\global\long\def\jj{\mathcal{J}}
\global\long\def\kk{\mathcal{K}}
\global\long\def\lll{\mathcal{L}}
\global\long\def\mm{\mathcal{M}}
\global\long\def\nn{\mathcal{N}}
\global\long\def\oo{\mathcal{O}}
\global\long\def\pp{\mathcal{P}}
\global\long\def\qq{\mathcal{Q}}
\global\long\def\rr{\mathcal{R}}
\global\long\def\ss{\;}
\global\long\def\uu{\mathcal{U}}
\global\long\def\vv{\mathcal{V}}
\global\long\def\ww{\mathcal{W}}
\global\long\def\xx{\mathcal{X}}
\global\long\def\yy{\mathcal{Y}}
\global\long\def\zz{\mathcal{Z}}
\global\long\def\k{\mathcal{\kappa}}
\global\long\def\r{\mathcal{\rho}}
\global\long\def\m{\mathcal{\mu}}
\global\long\def\n{\nu}

\global\long\def\Wa{W_{\mathcal{A}}}
\global\long\def\CC{\mathbb{C}}
\global\long\def\Dd{\mathbb{D}}
\global\long\def\EE{\mathbb{\mathbb{E}}}
\global\long\def\FF{\mathbb{F}}
\global\long\def\KK{\mathbb{K}}
\global\long\def\LL{\mathbb{L}}
\global\long\def\NN{\mathbb{N}}
\global\long\def\PP{\mathbb{P}}
\global\long\def\QQ{\mathbb{Q}}
\global\long\def\RR{\mathbb{R}}
\global\long\def\SS{\mathbb{S}}
\global\long\def\TT{\mathbb{T}}
\global\long\def\UU{\mathbb{U}}
\global\long\def\VV{\mathbb{V}}
\global\long\def\WW{\mathbb{W}}
\global\long\def\XX{\mathbb{X}}
\global\long\def\YY{\mathbb{Y}}
\global\long\def\ZZ{\mathbb{Z}}
\global\long\def\ne{\neq}
\global\long\def\ge{\leq}
\global\long\def\e{\varepsilon}
\global\long\def\dd{\delta}
\global\long\def\ne{\neq}
\global\long\def\sign{\stackrel[i=1]{n}{\sum}}
\global\long\def\sigm{\stackrel[i=1]{m}{\sum}}
\global\long\def\sigmn{\stackrel[i=m+1]{n}{\sum}}
\global\long\def\sigN{\stackrel[i=1]{N}{\sum}}
\global\long\def\se{\geq}

\global\long\def\con{\subset}
\global\long\def\sec{\cap}
\global\long\def\un{\cup}
\global\long\def\inf{\infty}
\global\long\def\do{_{1}}
\global\long\def\dt{_{2}}
\global\long\def\a{\rightarrow}
\global\long\def\x{\times}
\global\long\def\s{\curvearrowright}
\global\long\def\at{\mapsto}
\global\long\def\br{\left(\right)}
\global\long\def\c{\circ}
\global\long\def\d{\cdot}

\newcommand{\mynote}[1]{{#1}}

\definecolor{DarkPurple}{rgb}{0.7,0.2,0.4}
\newcommand{\rnote}[1]{\mynote{\color{DarkPurple} Raef: {#1}}}
\newcommand{\new}[1]{{\color{red} #1}}
\newcommand{\shay}[1]{\mynote{\color{red} Shay: {#1}}}

\title{Learners that Use Little Information}

\author{\name Raef Bassily \email bassily.1@osu.edu \\
       \addr Department of Computer Science and Engineering\\
       The Ohio State University\\
       Columbus, OH
       \AND
       \name Shay Moran \email shaymoran1@gmail.com \\
       \addr School of Mathematics\\
       Institute for Advanced Study\\
       Princeton, NJ
       \AND
       \name Ido Nachum \email idon@tx.technion.ac.il \\
       \addr Department of Mathematics\\
       Technion-IIT\\
       Haifa, Israel
       \AND
       \name Jonathan Shafer \email shaferjo@berkeley.edu \\
       \addr Computer Science Division \\
       University of California, Berkeley\\
       Berkeley, CA
       \AND
       \name Amir Yehudayoff \email amir.yehudayoff@gmail.com \\
       \addr Department of Mathematics\\
       Technion-IIT\\
       Haifa, Israel}

\editor{?}

\maketitle

\begin{abstract}
	We study learning algorithms that are restricted to using a small amount of information from their input sample. We introduce a category of learning algorithms we term {\em $d$-bit information learners}, which are algorithms whose output conveys at most $d$ bits of information of their input. A central theme in this work is that such algorithms generalize. 
	
	We focus on the  learning capacity of these algorithms, and prove sample complexity bounds with tight dependencies
	on the confidence and error parameters. We also observe connections with well studied notions 
	such as sample compression schemes, Occam's razor, PAC-Bayes and differential privacy.
	
	We discuss an approach that allows us to prove upper bounds on the amount of information that algorithms reveal about their inputs, and also provide a lower bound by showing a simple concept class for which every (possibly randomized) empirical risk minimizer must reveal a lot of information. On the other hand, we show that in the distribution-dependent setting every VC class has empirical risk minimizers that do not reveal a lot of information.
\end{abstract}

\begin{keywords}
  PAC Learning, Information Theory, Compression, PAC-Bayes, Sample Compression Scheme, Occam's Razor, Differential Privacy.
\end{keywords}

\section{Introduction}

The amount of information that an algorithm uses is a natural and important quantity to study. A central idea that this paper revolves around is that a learning algorithm that only uses a small amount of information from its input sample will generalize well. The amount of information used can be quantified by
\[
I(\cA(S);S)
\]
which is the mutual information between the output of the algorithm $\cA(S)$ and the input sample $S$. With this quantity we define a new class of learning algorithms termed {\em $d$-bit information learners}, which are learning algorithms in which the mutual information is at most $d$. This definition naturally combines notions from information theory and from learning theory~\citep{comp->learn, learn->comp}. It also relates to privacy, because one can think of this definition as a bound on the information that an algorithm leaks or reveals about its potentially sensitive training data.

\subsection{Results}
The main contributions of the current work are as follows (for definitions see Section~\ref{sec:prel}. For related work, see Section~\ref{related-work}).

\paragraph{Low information yields generalization.}
Our work stems from the intuition that a learning algorithm that only uses a small amount of information from its input will generalize well. We formalize this intuition in Theorem \ref{thm:info-comp}, that roughly states that
\[
\pr{}{| \text{true error $-$ empirical error} | > \e}=O\left(\frac{I(\cA(S);S)}{m\e^{2}}\right) ,
\]
where $m$ is the number of examples in the input sample.
We provide four different proofs of this statement, each of which emphasizes a different perspective 
of this phenomenon (see Section~\ref{app:proofs-gen}).

\paragraph{Sharpness of the sample complexity bound.}
Theorem \ref{thm:info-comp} entails that to achieve an error of $\e$ with confidence $\delta$, it is sufficient to use   
\[
m = \Omega\left(\frac{I(\cA(S);S)}{\eps^2 \cdot \delta}\right)
\]

examples. This differs from results in well-known settings such as learning hypothesis classes of finite VC dimension, where $m$ only grows logarithmically with $1/\delta$. Nonetheless, we prove that this bound is sharp (Section \ref{sec:sharp}). In particular, we show the existence of a learning problem and an $O(1)$-bit information learner that has a true error of at least $\frac{1}{2}$ with probability of at least $\frac{1}{m}$, where $m$ is the size of the input sample.

\paragraph{A lower bound for mutual information.}
In Section \ref{sec:lower} we show that for the simple class of thresholds, every (possibly randomized) proper ERM must reveal at least 
\[
\Omega \left( \frac{\log \log N}{m^2} \right)
\]
bits of information, where $N$ is the size of the domain. This means that even in very simple settings, learning may not always be possible if we restrict the information used by the algorithm. However, this does not imply the non-existence of bounded information learners that are either non-consistent or non-proper, an issue we leave open for future work.

\paragraph{Upper bounds for mutual information.}
Section \ref{sec:upper-method} provides a method for upper bounding the amount of information that algorithms reveal. We also define a generic learner $\cA_\hh$ for a concept class $\hh$, and show that in a number of natural cases this algorithm conveys as little information as possible (up to some constant). This generic learner is proper and consistent (i.e.\ an ERM); it simply outputs a uniformly random hypothesis from the set of hypotheses that are consistent with the input sample. However, we show that in other simple cases, this algorithm has significantly higher mutual information than necessary.

\paragraph{The distribution-dependent setting.}
We also consider an alternative setting, in Section \ref{sec:distribution-dependent}, in which the distribution over the domain is known to the learner. Here, for any concept class with finite VC-dimension $d$ and for any distribution on the data domain there exists a learning algorithm that outputs with high probability an approximately correct function from the concept class, such that the mutual information between the input sample and the output is $O\left(d\log(m)\right)$. In contrast with the abovementioned lower bound, the information here does not grow with the size of the domain.

\paragraph{Contrast with pure differential privacy.}
Corollary \ref{cor:privacy-separation} provides a separation between differential privacy and bounded mutual information. For the class of point functions ${\cal PF}$, it is known that any pure differentially private algorithm that properly learns this class must require a number of examples that grows with the domain size \citep{BeiNS10}. On the other hand, we show that the generic ERM learner $\cA_{\cal PF}$ leaks at most $2$ bits of information and properly learns this class with optimal PAC-learning sample complexity.

\subsection{Related Work}\label{related-work}

\paragraph{Sample compression schemes.} $d$-bit information learners resemble the notion of sample compression schemes~\citep{comp->learn}. Sample compression schemes correspond to learning algorithms whose output hypothesis is determined by a small subsample of the input. For example, {\emph support vector machines} output a separating hyperplane that is determined by a small number of support vectors.

Both sample compression schemes and information learners quantify (in different ways) the property of limited dependence between the output hypothesis and the input sample. It is therefore natural to ask how these two notions relate to each other.

It turns out that not every sample compression scheme of constant size also leaks a constant number of bits. Indeed, in Section~\ref{sec:lower} it is shown that there is no empirical risk minimizer (ERM) for thresholds that is an $O(1)$-bits information learner.\footnote{Here and below $O(\cdot)$, $\Omega(\cdot)$ and $\Theta(\cdot)$ mean up to some multiplicative universal constants.} On the other hand, there is an ERM for this class that is based on a sample compression scheme of size $O(1)$.

\paragraph{Occam's razor.} Theorem \ref{thm:info-comp} extends the classical Occam's razor generalization bound~\citep{occam}, which states the following: Assume a fixed encoding of hypotheses in $\hh$ by bit strings. The {\em complexity} of a hypothesis  is the bit-length of its encoding. A learning algorithm for $\hh$ is called an \emph{Occam-algorithm with parameters $c,\alpha$} if for every realizable sample of size $m$ it produces a consistent hypothesis of complexity at most $n^c m^{\alpha}$, where $n$ is the complexity of some hypothesis in $\hh$ that is consistent with the sample. 

\begin{addmargin}[1em]{2em}
	\begin{theorem}[\citealt*{occam}] 
		Let $\cA$ be an Occam-algorithm with parameters $c\se1$ and $0\ge\aa<1$.
		Let $\DD$ be  a realizable distribution,
		let $f\in\hh$ be such that $\err(f ;\DD)=0$, and let $n$ denote the complexity of $f$.
		Then, 
		\[\Pr_{S\sim \DD^m}\bigl[\err\bigl(\cA(S) ; \DD\bigl) \geq \eps\bigr]\leq \delta,\]
		as long as $m$ is at least 
		$\Omega\left(\frac{\log(\frac{1}{\dd})}{\e}+\left(\frac{n^{c}}{\e}\right)^{1/\left(1-\aa\right)}\right) .$
	\end{theorem}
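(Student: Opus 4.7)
The plan is to reduce the claim to the standard realizable PAC generalization bound for a finite hypothesis class. Since hypotheses in $\hh$ are encoded by bit strings, there are at most $2^{k+1}$ hypotheses of complexity at most $k$. Given a realizable distribution $\DD$ whose target $f$ has complexity $n$, the assumption on $\cA$ guarantees that on any sample of size $m$ drawn from $\DD$, the output $\cA(S)$ is consistent with $S$ and has complexity at most $n^c m^{\alpha}$. Hence $\cA(S)$ always lies in the (sample-size-dependent) subclass
\[
\hh_m := \{h \in \hh : |h| \leq n^c m^{\alpha}\}, \qquad |\hh_m| \leq 2^{n^c m^{\alpha}+1}.
\]

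Next, I would apply the classical union bound for realizable PAC learning to $\hh_m$: for any fixed finite class $\mathcal{G}$, the probability that some $h \in \mathcal{G}$ with $\err(h;\DD) > \eps$ is consistent with $m$ i.i.d.\ samples from $\DD$ is at most $|\mathcal{G}|(1-\eps)^m \leq |\mathcal{G}|\,e^{-\eps m}$. Since $\cA(S) \in \hh_m$ and is consistent with $S$, it is enough to ensure $|\hh_m|\,e^{-\eps m} \leq \delta$, i.e.\ to find $m$ satisfying
\[
\eps\, m \;\geq\; (\ln 2)\,n^c\, m^{\alpha} + \ln(2/\delta).
\]

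The main obstacle, and essentially the only real step, is that this is an \emph{implicit} inequality in $m$: because $\alpha < 1$ the linear left-hand side eventually dominates the sublinear term $m^{\alpha}$, but we still need an explicit sufficient condition. The standard trick is to require each of the two terms on the right to be absorbed by half of $\eps m$ separately. The $\log(1/\delta)$ term yields the sufficient condition $m \geq 2\ln(2/\delta)/\eps$, contributing the first summand. The $n^c m^{\alpha}$ term, using $\alpha < 1$, rearranges to $m^{1-\alpha} \geq 2(\ln 2)\,n^c/\eps$, i.e.\ $m \geq \bigl(2(\ln 2)\,n^c/\eps\bigr)^{1/(1-\alpha)}$, contributing the second summand. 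Combining the two gives the claimed bound
\[
m \;=\; \Omega\!\left(\frac{\log(1/\delta)}{\eps} + \left(\frac{n^c}{\eps}\right)^{1/(1-\alpha)}\right).
\]
I would expect no further subtleties: the realizability of $\DD$ and the consistency of $\cA(S)$ ensure that the event ``$\err(\cA(S);\DD) > \eps$'' is contained in ``some $h \in \hh_m$ with error $> \eps$ is consistent with $S$,'' which is exactly what the union bound controls.
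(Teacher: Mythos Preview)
The paper does not actually prove this theorem: it is stated in the Related Work section as a cited result from \cite{occam}, with no proof given. Your argument is correct and is precisely the standard proof of Occam's razor --- bound the number of hypotheses of complexity at most $n^c m^\alpha$, apply the realizable union bound $|\hh_m|\,e^{-\eps m}\le\delta$, and solve the implicit inequality by handling the $\log(1/\delta)$ and $n^c m^\alpha$ terms separately using $\alpha<1$.
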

\end{addmargin}

To relate Occam's razor to Theorem \ref{thm:info-comp}, observe that an Occam-algorithm is in particular a $O(n^c m^{\alpha})$-bit information learner (since its output hypothesis is encoded by $O(n^c m^{\alpha})$ bits), which implies that the probability of it outputting a function with true error more than $\e$ is at most $O \left( \frac{n^c m^{\alpha}}{m\e^{2}} \right)$.
The bound can be improved by standard confidence-boosting techniques (see Appendix~\hyperref[apndx:boost]{I}).

\paragraph{Mutual information for controlling bias in statistical analysis.} The connection between mutual information and statistical bias has been recently studied in \cite{Russo-Zhou16} in the context of \emph{adaptive} data analysis. In adaptive statistical analysis, the analyst conducts a sequence of analysis steps, where the choice and structure of each step depends adaptively on the outcomes of the previous ones. Some of the results of \cite{Russo-Zhou16} have been recently improved by \cite{RX17}.

\paragraph{Differential privacy and generalization.} Differential privacy, introduced by \cite{DMNS06}, is a rigorous notion of privacy enabling a strong guarantee that data holders may provide to their sources. \emph{Pure}\footnote{As opposed to a more relaxed notion known as \emph{approximate} differential privacy (see Section~\ref{sec:dp} for a precise definition).} differential privacy implies a bound on mutual information~\citep{MMP+10}.

The role of differential privacy in controlling overfitting has been recently studied in
several works \cite[e.g.][]{DFH+15, BNS+16, RRST16, BST14}. The authors of \cite{BNS+16} provide a treatment of differential privacy as a notion of distributional stability, and a tight characterization of the generalization guarantees of differential privacy.

\paragraph{Max-information and approximate max-information:} \cite{DFH+b15} introduced and studied the notions of max-information -- a stronger notion than mutual information -- and its relaxation, approximate max-information.\footnote{Unlike max-information, the relaxed notion of approximate max-information is not directly related to mutual information; that is, boundedness of one does not necessarily imply the same for the other.} They showed that these notions imply generalization and that pure differentially private
algorithms exhibit low (approximate) max-information. 
\cite{RRST16} showed that approximate differentially private algorithms also have low approximate max-information, and that the notion of approximate max-information captures the generalization properties (albeit with slightly worse parameters) of differentially private algorithms (pure or approximate).

\paragraph{Connections to approximate differential privacy:} 

\cite{De11} has shown that the relaxed notion of \emph{approximate} differential privacy does not necessarily imply bounded mutual information. In \cite{MMP+10}, it was also shown that if the dataset entries are independent, then approximate differential privacy implies a (weak) bound on the mutual information. Such a bound has an explicit dependence on the domain size, which restricts its applicability in general settings. Unlike the case of pure differential privacy, an exact characterization of the relationship between mutual information and approximate differential privacy algorithms is not fully known even when the dataset distribution is i.i.d.

\cite{BNSV15} showed that the sample complexity of properly learning thresholds (in one dimension) under approximate differential privacy is $\Omega(\log^*(N))$, {where $N$ is the domain size}. Hence, their result asserts the impossibility of this task for infinite domains. In this work, we show a result of a similar flavor (albeit of a weaker implication) for the class of bounded information learners. Specifically, for the problem of proper PAC-learning of thresholds over a domain of size $N$, we show that the mutual information of any proper learning algorithm (deterministic or randomized) that outputs a threshold that is \emph{consistent} with the input sample is $\Omega(\log \log N)$. This result implies that there are no consistent proper bounded information learners for thresholds over infinite domains.

\section{Preliminaries}\label{sec:prel}

\subsection{Learning}

We start with some basic terminology from statistical learning (for a textbook see \citealt*{SS-text}). Let $\xx$ be a set called \emph{the domain}, $\yy=\{0,1\}$ be the \emph{label-set}, and $\zz=\xx\times\yy$ be the \emph{examples domain}. A \emph{sample} $S = ((x_1,y_1),\ldots,(x_m,y_m))\in\zz^m$ is a sequence of examples. A function $h:\xx\to\yy$ is called a \emph{hypothesis} or a \emph{concept}.

Let $\DD$ be a distribution over $\zz$. The \emph{error} of a hypothesis $h$ with respect to $\DD$ is defined by $\err(h; \DD)=  \EE_{(x,y)\sim \DD} 1[h(x)\neq y]$. Let $S = ((x_1,y_1),\ldots,(x_m,y_m))$ be a sample. The \emph{empirical error} of $h$ with respect to $S$ is defined by $\emperr(h; S)= \frac{1}{m}\sum_{i=1}^{m}  1[h(x)\neq y]$.

A \emph{hypothesis class} $\hh$ is a set of hypotheses. A distribution $\DD$ is \emph{realizable} by $\hh$ if there is $h\in \hh$ with $\err(h; \DD)=0$. A sample $S$ is \emph{realizable} by $\hh$ if there is $h\in H$ with $\emperr(h; S)=0$.

A {\em learning algorithm}, or a {\em learner} is a (possibly randomized) algorithm $\cA$ that takes a sample $S$ as input and outputs a hypothesis, denoted by $\cA(S)$. We say that $\cA$ \emph{learns}\footnote{In this paper we focus on learning in the realizable case.} $\hh$ if for every $\eps,\delta>0$ there is a finite bound $m=m(\eps,\delta)$ such that for every $\hh$-realizable distribution $\DD$, 
\[
\Pr_{S\sim \DD^m}[\err\bigl(\cA(S) ;\DD\bigr) \geq \eps] \leq \delta.
\]
$\eps$ is called the \emph{error} parameter, and $\delta$ the \emph{confidence} parameter. $\cA$ is called {\em proper} if $\cA(S)\in \hh$ for every realizable $S$, and it is called {\em consistent} if $\emperr\bigl(\cA(S); S\bigr)=0$ for every realizable $S$.

\subsection{Information Theoretic Measures}

Information theory studies the quantification and communication of information. In this work, we use the language of learning theory combined with information theory to define and study a new type of learning theoretic compression.
Here are standard notions from information theory (for more background see the textbook \citealt*{book}).

Let $Y$ and $Z$ be two discrete random variables.
The entropy of $Y$ measures the number of bits required to encode
$Y$ on average.
\medskip

\begin{definition}[Entropy]
	The entropy of $Y$ is defined as
	$$H\left(Y\right)= - \sum_{y}\Pr\left(Y=y\right)\log\left(\Pr\left(Y=y\right)\right),$$
	where $\log = \log_2$ and
	by convention $0\log0 =0$.
\end{definition}

The mutual information between $Y$ and $Z$ is (roughly speaking) a measure
for the amount of random bits $Y$ and $Z$ share on average. 
It is also a measure of their independence;
for example $I\left(Y;Z\right)=0$ iff $Y$ and $Z$ are independent. 

\medskip

\begin{definition}[Mutual information]
	The mutual information between $Y$ and $Z$ is
	defined to be
	$$I\left(Y;Z\right)=H(Y)+H(Z)-H(Y,Z).$$ 
\end{definition}

The Kullback-Leibler divergence
between two measures $\m$ and $\n$ is a useful measure for the ``distance'' between them (it is not a metric and may be infinite).

\medskip
\begin{definition}[KL-divergence]
	The KL-divergence between two measures $\mu$ and $\nu$
	on $\cX$ is
	$$\kl\left(\mu||\n\right)=\sum_{x}\mu(x)\log \frac{\mu(x)}{\nu(x)}$$
	where $0 \log \frac{0}{0}=0$.
\end{definition}

Mutual information can be written as the following
KL-divergence:
\[I\left(Y;Z\right)=\kl\left(p_{Y,Z}||p_Y \cdot p_Z \right),\]
{where $p_{Y,Z}$ is the joint distribution of the pair $(Y,Z)$, 
	and $p_Y \cdot p_Z$ is the product 
	of the marginals $p_Y$ and $p_Z$.}

\subsection{Differential Privacy}\label{sec:dp}

Differential privacy \citep{DMNS06} is a standard notion for statistical data privacy. Despite the connotation perceived by the name, differential privacy is a distributional stability condition that is imposed on an algorithm performing analysis on a dataset. Algorithms satisfying this condition are known as \emph{differentially private} algorithms. There is a vast literature on the properties of this class of algorithms and their design and structure \citep[see e.g.,][for an in-depth treatment]{dp-text}. 

\medskip

\begin{definition}[Differential privacy]
	Let $\cX, \cZ$ be two sets, and let $m\in\NN$. Let $\alpha >0, \beta\in [0,1)$. An 
	algorithm $\cA:\cX^m\rA \cZ$ is said to be $(\alpha, \beta)$-differentially private if for all datasets $S, S' \in\cX^m$ that differ in exactly one entry, 
	and all measurable subsets $\cO\subseteq \cZ$, we have 
	\begin{align}
	\pr{\cA}{\cA(S)\in\cO}&\leq e^{\alpha}~\pr{\cA}{\cA(S')\in\cO} + \beta\nonumber
	\end{align}
	where the probability is taken over the random coins of $\cA$. 
	
	When $\beta=0$, the condition is sometimes referred to as \emph{pure} differential privacy (as opposed to \emph{approximate} differential privacy when $\beta>0$.) 
\end{definition}

The general form of differential privacy entails two parameters: $\alpha$ which is typically a small constant 
and $\beta$ which in most applications is of the form $\beta = o(1/m)$.

Differential privacy has been shown to provide non-trivial generalization guarantees especially in the adaptive settings of statistical analyses \cite[see e.g.,][]{DFH+15, BNS+16, DFH+b15}. In the context of (agnostic) PAC-learning,  there has been a long line of work \citep[e.g.][]{KLNRS08, BeiNS10, BeiNS13, FX14, BNSV15} that studied differentially private learning and the characterization of the sample complexity of private learning in several settings. However, the picture of differentially private learning is very far from complete and there are still so many open questions. \cite{Vad_survey_DP16} gives a good survey on the subject.

\section{$d$-Bit Information Learners}

Here we define learners that use little information from their input.\footnote{In this text we focus on Shannon's mutual information, but other notions of divergence may be interesting to investigate as well.} We start by setting some notation. Let $\cA$ be a {(possibly randomized)} learning algorithm. For every sample $S \in \left(\cX\times \{0,1\} \right)^{m}$, let $P_{h | S}(\cdot)$ denote the conditional distribution function of the output of the algorithm $\cA$ given that its input is $S$. When $\cA$ is deterministic, $P_{h | S}$ is a degenerate distribution. For a fixed distribution $\DD$ over examples and $m\in\NN$, let $P_h(\cdot)$ denote the marginal distribution of the output of $\cA$ when it takes an input sample of size $m$ drawn i.i.d.~from $\DD$, i.e.\ $P_h(f)=\ex{S\sim \DD^m}{P_{h|S}(f)}$ for every function $f$.

\medskip

\begin{definition}[Mutual information of an algorithm]\label{defn:mut-inf-of-alg}
	We say that $\cA$ has mutual information of at most $d$ bits (for sample size $m$) with respect to a distribution $\DD$ if
	\[
	I(S; \cA(S))\leq d
	\]
	where $S\sim\DD^m$.
\end{definition}

\medskip

\begin{definition}[$d$-bit information learner]\label{defn:inf-comp}
	A learning algorithm $\cA$ for $\hh$ is called a $d$-bit information learner if it has mutual information
	of at most $d$ bits with respect to every realizable distribution ($d$ can depend on the sample size).
\end{definition}

\subsection{Bounded Information Implies Generalization}\label{sec:gen}

The following theorem quantifies the generalization guarantees of $d$-bit information learners.

\medskip

\begin{theorem}
	\label{thm:info-comp}
	Let $\cA$ be a learner that has mutual information of at most $d$ bits with a distribution $\DD$,
	and let $S\sim \DD^m$. Then, for every $\eps > 0$,
	\[
	\pr{\cA, S}{\lvert \emperr\left(\cA(S); S\right)-\err(\cA(S); \DD)\rvert > \e}<\frac{d+1}{2m\e^{2}-1}
	\]
	where the probability is taken over the randomness in the sample $S$ and the randomness of $\cA$.
\end{theorem}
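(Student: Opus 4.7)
The plan is to transfer a standard Hoeffding-type concentration bound from a product distribution (where the hypothesis is independent of the sample) to the actual joint distribution of $(\cA(S),S)$, losing a factor governed by the mutual information via the data processing inequality for KL divergence.

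First, I would introduce two measures on hypothesis--sample pairs $(h,s)$: let $P$ denote the joint distribution of $(\cA(S), S)$ when $S \sim \DD^m$, and let $Q$ be the product $P_h \otimes \DD^m$, so that under $Q$ the hypothesis $h$ is drawn from the marginal of $\cA(S)$ independently of a fresh sample $s \sim \DD^m$. By definition of mutual information, $\kl(P \,\|\, Q) = I(\cA(S); S) \leq d$. Let $E = \{(h, s) : |\emperr(h; s) - \err(h; \DD)| > \eps\}$, and write $p = P(E)$ (the quantity of interest) and $q = Q(E)$. The first ingredient is a bound on $q$: under $Q$, conditional on any fixed $h$, the empirical error $\emperr(h; s)$ is the mean of $m$ i.i.d.\ $\{0,1\}$-valued random variables with mean $\err(h; \DD)$, so Hoeffding's inequality gives
\[
\Pr_{s \sim \DD^m}\!\bigl[|\emperr(h; s) - \err(h; \DD)| > \eps\bigr] \,\leq\, 2 e^{-2m\eps^2}.
\]
Averaging over $h$ yields $q \leq 2 e^{-2m\eps^2}$, and hence $\log(1/q) \geq 2m\eps^2 - \log 2 > 2m\eps^2 - 1$.

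Next I would transfer to $P$ via the data processing inequality for KL divergence applied to the indicator map $(h,s) \mapsto \mathbf{1}_E(h,s)$. The pushforwards of $P$ and $Q$ under this map are Bernoulli distributions with parameters $p$ and $q$, so data processing gives $\kl(\mathrm{Bern}(p) \,\|\, \mathrm{Bern}(q)) \leq \kl(P \,\|\, Q) \leq d$. Expanding the binary KL and discarding the non-negative term $(1-p)\log(1/(1-q))$ yields
\[
p \log(1/q) - H_2(p) \,\leq\, \kl(\mathrm{Bern}(p) \,\|\, \mathrm{Bern}(q)) \,\leq\, d,
\]
where $H_2(p) = -p\log p - (1-p)\log(1-p) \leq 1$ is the binary entropy. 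Thus $p\log(1/q) \leq d+1$, and substituting the lower bound on $\log(1/q)$ gives $p \leq (d+1)/(2m\eps^2 - 1)$, as required.

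The main obstacle is selecting the right transfer inequality. Natural alternatives such as Pinsker's inequality or the Donsker--Varadhan variational formula produce square-root or otherwise weaker dependencies; the specific form $(d+1)/(2m\eps^2-1)$ requires the sub-Gaussian tail from Hoeffding on the product-distribution side, together with the elementary estimate $H_2(p) \leq 1$ on the data-processing side. A secondary subtlety is that one must restrict attention to a fixed marginal $P_h$ (as opposed to a data-independent ``prior''): this is precisely what allows the mutual information, rather than the larger max-information, to appear in the bound.
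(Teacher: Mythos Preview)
Your proof is correct and is essentially the paper's Proof~I: the paper's Lemma~\ref{lem:lemmaDiv} is precisely the inequality $\mu(E)\log(1/\nu(E)) \leq \kl(\mu\|\nu) + 1$ that you obtain via data processing on the indicator of $E$ together with $H_2(p)\leq 1$, and both arguments then plug in the Hoeffding/Chernoff bound for the product-measure probability of $E$. The only cosmetic difference is that the paper applies this inequality fiber-by-fiber (conditioning on $h$ and averaging, Lemma~\ref{lem:lemmaDiv2}) whereas you apply it once to the joint distribution versus the product; both routes yield the same bound.
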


Theorem~\ref{thm:info-comp} states a simple and basic property, and is proved in section~\ref{app:proofs-gen}
(below we provide a proof sketch for deterministic algorithms). 

In particular, if a class $\hh$ admits a $o(m)=d$-bit information learner then the class is PAC learnable. Also, some of the proofs will go through for multi-class classification with every bounded loss function.

The fact that the sample complexity bound that follows from the theorem is sharp is proved in Section \ref{sec:sharp}. We mention that the dependence on $\e$ can be improved in the realizable case; if the algorithm always outputs a hypothesis with empirical error $0$ then the bound on the right hand side can be replaced by $O\left( \frac{d+1}{m\e -1} \right)$.
As in similar cases, the reason for this difference stems from the fact that estimating the bias of a coin up to an additive
error $\eps$ requires $m \approx \frac{1}{\eps^2}$ samples, but if the coin falls on heads with probability $\eps$ then the chance of seeing $m$ tails in a row is $(1-\eps)^m \approx e^{-m \eps}$.

\subsubsection*{Proof Sketch for Deterministic Algorithms}\label{sec:proof-sketch}

Here we sketch a proof of Theorem~\ref{thm:info-comp} for deterministic algorithms. When $\cA$ is deterministic, we have
\[
I = I(S;\cA(S)) = H(\cA(S)).
\]
Let $P_{h}$ denote the distribution of $\cA(S)$. Let $\hh_0$ be the set of hypotheses $f$ so that $P_{h}(f) \geq 2^{I/\delta}$. By Markov's inequality, $P_{h}(\hh_0) \geq 1-\delta$. In addition, the size of $\hh_0$ is at most $2^{I/\delta}$. So Chernoff's inequality and the union bound imply that for every $f \in \hh_0$ the empirical error is close to the true error for $m \approx \frac{I}{\eps^2 \delta}$ (with probability at least $1-\delta$).

\section{Proofs that Bounded Information Implies Generalization}\label{app:proofs-gen}

In this paper, we prove the statement in Theorem~\ref{thm:info-comp} via different approaches (some of the arguments are only sketched). We provide four different proofs of this statement, each of which highlights a different general idea. 

The first proof is based on an information theoretic lemma, which roughly states that if the KL-divergence between two measures $\mu$ and $\nu$ is small then $\mu(E)$ is not much larger than $\nu(E)$ for every event $E$. The nature of this proof strongly resembles the proof of the PAC-Bayes bounds \citep{SS-text}, and indeed a close variant of the theorem can be derived from these standard bounds as well (see proof IV). The second proof is based on a method to efficiently ``de-correlate'' two random variables in terms of their mutual information; roughly speaking, this implies that an algorithm of low mutual information can only generate a small number of hypotheses and hence does not overfit. The third proof highlights an important connection between low mutual information and the stability of a learning algorithm.  The last proof uses the PAC-Bayes framework. Following is the first proof, see appendices \ref{proof-ii}, \ref{proof-iii} and \ref{proof-iv} for the other proofs.

\subsection{Proof I: Mutual Information and Independence}

The first proof of that we present uses the following lemma, which allows to control a distribution $\m$ by a distribution $\n$ as long as it is close to it in KL-divergence. The proof technique is similar to a classical technique in Shannon's information theory, e.g., in \cite{arutyunyan'68}.

\medskip

\begin{lem}
	\label{lem:lemmaDiv}
	Let $\m$ and $\n$ be probability distributions
	on a finite set $\cX$ and let $E \subseteq \cX$.
	Then, 
	$$\m\left(E\right)\ge \frac{\kl\left(\m||\n\right)+1}{\log\left(1/\n\left(E\right)\right)}.$$
\end{lem}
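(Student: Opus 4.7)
The plan is to apply the variational (Gibbs / Donsker-Varadhan) lower bound on KL-divergence with a test function tailored to the event $E$, which gives a genuinely different route than going through the two-point partition.

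Recall that for any bounded $f\colon\cX\to\RR$, one has the one-sided inequality
\begin{align*}
\kl(\m\|\n) \;\geq\; \EE_\m[f] \;-\; \log \EE_\n[e^{f}],
\end{align*}
which follows from Jensen's inequality applied to the density ratio $d\m/d\n$. I would use it with the indicator-valued test function $f = c\cdot\mathbf{1}_E$ for the specific choice $c = \log(1/\n(E))$; this is precisely the choice that makes $\EE_\m[f]$ equal to the quantity $\m(E)\log(1/\n(E))$ that needs to be controlled.

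With this $f$, the posterior expectation is $\EE_\m[f] = c\,\m(E) = \m(E)\log(1/\n(E))$, while the prior exponential moment is
\begin{align*}
\EE_\n[e^f] \;=\; e^c\cdot\n(E) \;+\; 1\cdot(1-\n(E)) \;=\; 1 + (1-\n(E)) \;\leq\; 2,
\end{align*}
since $e^c\,\n(E) = (1/\n(E))\cdot\n(E) = 1$. Hence $\log \EE_\n[e^f] \leq \log 2 = 1$ (recall $\log = \log_2$ in this paper), and plugging back in gives $\kl(\m\|\n) \geq \m(E)\log(1/\n(E)) - 1$, equivalently
\begin{align*}
\m(E)\log(1/\n(E)) \;\leq\; \kl(\m\|\n)+1.
\end{align*}
Dividing both sides by $\log(1/\n(E))$ then yields exactly the bound on $\m(E)$ claimed in the lemma.

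The degenerate cases $\n(E)\in\{0,1\}$ are handled by the standard conventions: if $\n(E)=1$ then $\log(1/\n(E))=0$ and the right-hand side is vacuously $\infty$, while if $\n(E)=0$ then finiteness of KL forces $\m(E)=0$. The main ``aha'' step is picking $c=\log(1/\n(E))$, which exactly cancels $\n(E)$ inside the exponential moment and leaves the clean constant $\log 2 = 1$; this constant is precisely the ``$+1$'' appearing in the numerator. Everything after the choice of test function is a single unconditional instance of Jensen's inequality, so the only real obstacle is identifying this $f$.
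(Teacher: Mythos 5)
Your approach is genuinely different from the paper's. The paper expands $\kl(\mu\|\nu)$ directly as a sum over $E$ and $E^c$, applies Jensen to each conditional average, and then absorbs the resulting binary-entropy term $-\mu(E)\log\mu(E)-\mu(E^c)\log\mu(E^c)$ into a ``$-1$''. You instead invoke the Donsker--Varadhan variational lower bound once, with a single well-chosen test function $f = c\,\mathbf{1}_E$, and the ``$+1$'' arises from the exponential moment being at most $2$. Both reduce to Jensen, but the organization and the source of the additive constant differ, and your packaging is arguably cleaner if one already knows the variational form.

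However, there is a base-of-logarithm error that breaks the key cancellation as written. You set $c = \log(1/\nu(E))$ with $\log=\log_2$ (as in this paper), but then assert $e^c\,\nu(E) = (1/\nu(E))\cdot\nu(E)=1$. That identity requires $c=\ln(1/\nu(E))$; with $c=\log_2(1/\nu(E))$ one has $e^c=(1/\nu(E))^{1/\ln 2}$, so $e^c\nu(E)=\nu(E)^{1-1/\ln 2}$, which blows up as $\nu(E)\to 0$ and makes $\EE_\nu[e^f]$ unbounded. Relatedly, the variational inequality you wrote, with $\log=\log_2$ outside and $e^{(\cdot)}$ inside, is not the correct base-$2$ Donsker--Varadhan (it holds for $\EE_\mu[f]\geq 0$, but it is lossy). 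The clean fix is to use the base-$2$ form $\kl(\mu\|\nu)\geq \EE_\mu[f]-\log_2\EE_\nu[2^f]$ (obtained from the usual nats version by substituting $f\mapsto (\ln 2)f$ and dividing by $\ln 2$); then $2^c\nu(E)=1$, $\EE_\nu[2^f]\leq 2$, and $\log_2 2=1$, recovering the lemma exactly. Alternatively, work entirely in nats (so $c=\ln(1/\nu(E))$ and the additive constant is $\ln 2$) and divide by $\ln 2$ at the end. Either fix is routine; the structure of your argument is sound, but the base must be tracked consistently for the pivotal cancellation to go through.
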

The lemma enables us to compare between events of small probability: if $\n\left(E\right)$ is small then $\m\left(E\right)$ is also small, as long as $\kl(\mu||\nu)$ is not very large.

The bound given by the lemma above is tight, as the following example shows. Let $\cX=[2n]$ and let $E=[n]$. For each $x \in \cX$, let
\[
\m\left(x\right)=\frac{1}{2n}
\]
and let
\[
\n\left(x\right)=\begin{cases}
1/n^{2} & x \in E , \\
\left(n-1\right)/n^{2} & x \not \in E . \end{cases}
\]
Thus, 
$\m\left(E\right)=\frac{1}{2}$ and $\n\left(E\right)=\frac{1}{n}$.
But on the other hand 
\begin{align*}
\kl\left(\m||\n\right)
& =\frac{1}{2}\log\left(\frac{n}{2}\right)+\frac{1}{2}\log\left(\frac{1}{2}\right)+o\left(1\right) ,
\end{align*}
so
\[
\frac{1}{2}=\m\left(E\right)\se\lim_{n\a\inf}\frac{\kl\left(\m||\n\right)+1}{\log\left(1/\n\left(E\right)\right)}=\frac{1}{2} .
\]
Similar examples can be given when $\kl\left(\m||\n\right)$ is constant.

We now change the setting to allow it to apply more naturally to $d$-bit information learners.

\medskip

\begin{lem}
	\label{lem:lemmaDiv2}
	Let $\mu$ be a distribution on the space $\cX\times \cY$ and let $E$ be an event that satisfies 
	\[
	\mu_X (E_y)<\alpha
	\]
	for all $y\in \cY$, where $\mu_X$ is the marginal distribution of $X$ and $E_y
	= \{x : (x,y) \in E\}$ is a fiber of E over y. Then
	\[
	\mu (E) \leq  \frac{I(X;Y)+1}{\log (1/\alpha)}
	\]
\end{lem}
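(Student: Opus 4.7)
The plan is to reduce Lemma \ref{lem:lemmaDiv2} to Lemma \ref{lem:lemmaDiv} by choosing the right reference measure $\nu$. Specifically, I would take $\mu$ in Lemma \ref{lem:lemmaDiv} to be the joint distribution $p_{X,Y}$ of the pair $(X,Y)$, and set $\nu := p_X \cdot p_Y$ to be the product of the marginals. This choice is natural because it is precisely the one that turns the KL-divergence appearing in Lemma \ref{lem:lemmaDiv} into mutual information: by the identity recalled in Section~\ref{sec:prel},
\[
\kl(p_{X,Y} \,\|\, p_X \cdot p_Y) = I(X;Y).
\]

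Next, I would control $\nu(E)$ using the fiberwise hypothesis. By Fubini (or by definition of the product measure) and the assumption that $\mu_X(E_y) < \alpha$ for every $y$,
\[
\nu(E) \;=\; \sum_{y \in \cY} p_Y(y)\, p_X(E_y) \;=\; \sum_{y \in \cY} p_Y(y)\, \mu_X(E_y) \;<\; \alpha \sum_{y \in \cY} p_Y(y) \;=\; \alpha.
\]
In particular $\log(1/\nu(E)) \geq \log(1/\alpha) > 0$, assuming $\alpha < 1$ (the cases $\alpha \geq 1$ or $\nu(E)=0$ are either vacuous or trivial and can be handled separately).

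Now I would apply Lemma \ref{lem:lemmaDiv} directly to obtain
\[
\mu(E) \;\leq\; \frac{\kl(p_{X,Y} \,\|\, p_X \cdot p_Y) + 1}{\log(1/\nu(E))} \;\leq\; \frac{I(X;Y) + 1}{\log(1/\alpha)},
\]
which is exactly the desired conclusion.

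There is no real obstacle here: once the correct reference measure is identified, the proof is a one-line reduction. The only thing to be careful about is the two small edge cases mentioned above (when $\alpha \geq 1$ the claim is trivial, and when $\nu(E)=0$ one needs $\mu \ll \nu$ on $E$, which follows since $\mu(E)=0$ in that case), and the fact that Lemma \ref{lem:lemmaDiv} is stated for finite $\cX$, so if $\cX \times \cY$ is infinite one would invoke a standard discretization/limit argument. The conceptual point this proof illustrates — and which justifies writing the lemma in this form — is that a bound on the mutual information $I(X;Y)$ lets one upgrade a uniform \emph{per-slice} probability bound $\mu_X(E_y) < \alpha$ into a bound on the joint probability $\mu(E)$, with only a logarithmic loss in $\alpha$.
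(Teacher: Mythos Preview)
Your proof is correct, but it takes a different route from the paper's. You apply Lemma~\ref{lem:lemmaDiv} \emph{once} on the joint space $\cX\times\cY$, with $\mu=p_{X,Y}$ and $\nu=p_X\cdot p_Y$, so that $\kl(\mu\|\nu)=I(X;Y)$ and $\nu(E)=\sum_y p_Y(y)\,\mu_X(E_y)<\alpha$ by the fiberwise hypothesis. The paper instead applies Lemma~\ref{lem:lemmaDiv} \emph{for each fixed $y$} on the space $\cX$, with the conditional $\mu_{X|Y=y}$ against the marginal $\mu_X$, obtaining $\mu_{X|Y=y}(E_y)\le\frac{\kl(\mu_{X|Y=y}\|\mu_X)+1}{\log(1/\alpha)}$, and then averages over $y$ (using $\mathbb{E}_y\,\kl(\mu_{X|Y=y}\|\mu_X)=I(X;Y)$). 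Your global approach is slightly slicker and even yields the marginally stronger denominator $\log(1/\nu(E))\ge\log(1/\alpha)$; the paper's fiberwise argument has the advantage of working directly on $\cX$ (so finiteness of $\cX$ alone suffices for invoking Lemma~\ref{lem:lemmaDiv}) and makes transparent that the relevant divergence is the \emph{average} of the per-slice divergences.
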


The lemma enables us to bound the probability  of an event $E$, if we have a bound on the probability of its fibers over $y$ measured with the marginal distribution of $X$.

This lemma can be thought of as a generalization of the extremal case where $X$ and $Y$ are independent (i.e.\ $I(X,Y)=0$). 
In this case,  the lemma corresponds to the following geometric statement in the plane: if the width of every $x$-parallel fiber of a shape is at most $\alpha$ and its height is bounded by 1 then its area is also at most $\alpha$. The bound given by the above lemma is $\frac{1}{\log(1/\alpha)}$, which is weaker, but the lemma applies more generally when $I(X,Y)>0$. In fact, the bound is tight when the two variables are highly dependent; e.g. $X=Y$ and $X \sim U([n])$. In this case, the probability of the diagonal $E$ is $1$, while $I(X;Y)=\log (n)$ and $\alpha=1/n$. So indeed $1=\mu (E)\approx \frac{\log (n)+1}{\log (n)}$.

We now use this lemma to prove the theorem.

\begin{proof}[Proof of Theorem~\ref{thm:info-comp}]
	Let $\mu$ be the distribution on pairs $(S,h)$ where $S$ is chosen i.i.d.~from $\DD$ and $h$ is the output of the algorithm given $S$. Let $E$ be the event of error; that is,
	\[
	E=\left\{ (S,h) :\Big| \err(h; \DD)-\emperr\left(h ; S\right)
	\Big|>\e 
	\right\} .
	\]
	
	Using Chernoff's inequality, for each $h$,
	$$\mu_S\left(E_{h}\right)\ge2\d\exp\left(-2m\e^{2}\right),$$
	where $E_h$ is the fiber of $E$ over function $h$. 
	
	Lemma~\ref{lem:lemmaDiv2} implies
	\begin{align*}
	\mu \left(E\right)\leq   \frac{I(S;\cA(S))+1}{2m\e^{2}-1} ,
	\end{align*}
\end{proof}

We now prove lemmas~\ref{lem:lemmaDiv} and~\ref{lem:lemmaDiv2}.

\begin{proof}[Proof of Lemma~\ref{lem:lemmaDiv}]
	
	\begin{align*}
	\kl\left(\m||\n\right)  
	&  =  -\m\left(E\right)\underset{x\in E}{\sum}\frac{\m\left(x\right)}{\m\left(E\right)}\d\log\left(\frac{\n\left(x\right)}{\m\left(x\right)}\right)-\m\left(E^{c}\right)\underset{x\in E^{c}}{\sum}\frac{\m\left(x\right)}{\m\left(E^{c}\right)}\d\log\left(\frac{\n\left(x\right)}{\m\left(x\right)}\right)\\
	&  \stackrel{\left(a\right)}{\se}-\m\left(E\right)\d\log\left(\underset{x\in E}{\sum}\frac{\m\left(x\right)}{\m\left(E\right)}\frac{\n\left(x\right)}{\m\left(x\right)}\right)-\m\left(E^{c}\right)\d\log\left(\underset{x\in E^{c}}{\sum}\frac{\m\left(x\right)}{\m\left(E^{c}\right)}\frac{\n\left(x\right)}{\m\left(x\right)}\right) \\
	&  = -\m\left(E\right)\d\log\left(\frac{\n\left(E\right)}{\m\left(E\right)}\right)-\m\left(E^{c}\right)\d\log\left(\frac{\n\left(E^{c}\right)}{\m\left(E^{c}\right)}\right) \\
	& \stackrel{\left(b\right)}{\se} -\m\left(E\right)\d\log\left(\n\left(E\right)\right)-\m\left(E^{c}\right)\d\log\left(\n\left(E^{c}\right)\right)-1 \\
	& \geq -\m\left(E\right)\d\log\left(\n\left(E\right)\right)-1 ,
	\end{align*}
	where (a) follows by convexity, and (b) holds since the binary entropy is at most one. 
\end{proof}

\begin{proof}[Proof of Lemma~\ref{lem:lemmaDiv2}]
	
	By Lemma~\ref{lem:lemmaDiv}, for each $y$,
	$$\mu_{X|Y=y}(E_y) \leq \frac{\kl\left(\mu_{X|Y=y}||\mu _X\right)+1}{\log\left(1/\mu_X\left(E_{y}\right)\right)} \leq \frac{\kl\left(\mu_{X|Y=y}||\mu _X\right)+1}{\log\left(1/\alpha \right)}.$$
	Taking expectation over $y$ yields
	$$\mu(E) \leq  \frac{I(X;Y)+1}{\log\left(1/\alpha \right)}.$$
\end{proof}

\subsection{The Sample Complexity Bound is Sharp}\label{sec:sharp}

Standard bounds on the sample complexity of learning hypotheses classes of VC dimension $d$ imply that to achieve a fixed confidence $\delta \leq 1/2$ one must use at least 
\[
m = \Omega\left(\frac{d + \log(1/\delta)}{\eps^2}\right)
\]
examples in the non-realizeable case \citep[see e.g.,][theorem 6.7]{SS-text}, and this bound is sharp.

In contrast, Theorem~\ref{thm:info-comp} above states that achieving confidence $\delta$ requires 
\[
m = \Omega\left(\frac{d}{\eps^2}\cdot\frac{1}{\delta}\right)
\]
examples, where in this case $d$ is the bound on $I(\cA(S);S)$. A natural question to ask is whether this sample complexity bound is also sharp. We now show that indeed it is.

To see that the bound is tight for $d$ and $\eps$, consider the case where $\cX=[d]$ and $\hh=\{0,1\}^\cX$. For any learner $\cA$ it holds that
\[
I(\cA(S);S)\leq H(\cA(S)) \leq \log|\hh|=d
\]
However, the VC dimension of $\hh$ is also $d$. Because the bound for VC dimension is always sharp and it equals the bound from Theorem~\ref{thm:info-comp}, it follows that that bound is also sharp in this case.

To see that the bound is sharp in $\delta$ as well, consider the following proposition.

\medskip

\begin{prop}
	\label{prop:Sharpness}
	Let $n\geq m\geq 4$ be integers such that $n$ is sufficiently large.
	Let $\xx = [n]$ and let $\DD$ be the uniform distribution on examples of the form $\{(x,1) : x\in [n]\}$.
	There is a deterministic learning algorithm $\cA : (\cX \times \{0,1\})^m \to \{0,1\}^\cX$
	with sample size $m$ and mutual information $O(1)$
	so that
	\[ \pr{S}{\lvert \emperr\left(\cA(S); S\right)-\err(\cA(S); \DD)\rvert \geq 1/2} \geq \frac{1}{m}\]
	where $S$ is generated i.i.d.~from $\DD$ and $f$.
\end{prop}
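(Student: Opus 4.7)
The plan is to design $\cA$ so that it almost always outputs a ``safe'' default hypothesis but on a carefully chosen event of probability $\Theta(1/m)$ outputs one of \emph{many} possible ``bad'' hypotheses. The key point is that by spreading a total probability mass of $\Theta(1/m)$ roughly uniformly over $K=2^{\Theta(m)}$ distinct tail outputs, the total entropy of $\cA(S)$ stays $O(1)$ while the generalization gap on the bad event is exactly $1/2$.

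Setup. Assume $n$ is even for simplicity and set $K:=\lceil 2\cdot 2^m/m\rceil$. I choose subsets $T_1,\ldots,T_K\subseteq[n]$, each of size $n/2$, by the probabilistic method: sample each $T_i$ independently and uniformly from the size-$n/2$ subsets of $[n]$. For any fixed $S\in[n]^m$ with distinct coordinates, $\Pr_T[S\subseteq T_i]=2^{-m}$, so $\Pr_T[\exists i:S\subseteq T_i]=1-(1-2^{-m})^K\geq 1-e^{-2/m}$, and a short calculation shows $1-e^{-2/m}\geq 1/m$ for every $m\geq 3$. Averaging over $S$ (whose coordinates are distinct with probability $\to 1$ as $n\to\infty$) and over $T$ then yields, by the probabilistic method, a specific choice of $T_1,\ldots,T_K$ satisfying $p:=\Pr_{S\sim\DD^m}[\exists i:S\subseteq T_i]\geq 1/m$.

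Construction of $\cA$ and separation. Put $h_0\equiv 1$ and $h_i:=\mathbf{1}_{T_i}$ for $i\in[K]$. Given $S=((x_1,1),\ldots,(x_m,1))$, define $i(S):=\min\{i\in[K]:\{x_1,\ldots,x_m\}\subseteq T_i\}$ if this set is nonempty, and $i(S):=0$ otherwise; output $\cA(S):=h_{i(S)}$. When $i(S)\geq 1$ every $x_j\in T_{i(S)}$, so $h_{i(S)}(x_j)=1$ for all $j$, giving $\emperr(\cA(S);S)=0$; on the other hand $\err(\cA(S);\DD)=1-|T_{i(S)}|/n=1/2$. Hence $|\emperr-\err|=1/2$ on an event of probability $p\geq 1/m$, which is the desired lower bound.

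Mutual information and main obstacle. Since $\cA$ is deterministic, $I(\cA(S);S)=H(\cA(S))$. A union bound gives $p\leq\sum_i\Pr_S[S\subseteq T_i]=K\cdot 2^{-m}\leq 2/m$ deterministically, and each tail probability satisfies $\Pr[\cA(S)=h_i]\leq 2^{-m}$ for $i\geq 1$. The standard bound $H(\cA(S))\leq H_2(p)+p\log K$ with $p\leq 2/m$ and $\log K\leq m+1$ then gives $H(\cA(S))=O(1)$. The main subtlety is calibrating $K$ to a narrow sweet spot: the bad event has to be common enough ($p\geq 1/m$) to match the target lower bound, yet spread over enough distinct hypotheses that the tail entropy $p\log K$ stays bounded. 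Because any single fixed hypothesis with true error $1/2$ produces $\emperr=0$ with probability only $2^{-m}$, the construction is forced to draw on $\Omega(2^m/m)$ bad hypotheses, placing it right at the boundary of the entropy budget; managing both inequalities simultaneously is what makes the argument delicate.
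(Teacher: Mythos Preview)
Your proposal is correct and follows essentially the same construction as the paper: pick $\Theta(2^m/m)$ random half-sized subsets $T_i$, output $\mathbf{1}_{T_i}$ for the first $T_i$ covering the sample and the all-ones function otherwise, then bound $H(\cA(S))$ by $H_2(p)+p\log K=O(1)$ using $p=\Theta(1/m)$. The only noteworthy difference is in establishing the lower bound $p\ge 1/m$: the paper uses an inclusion--exclusion/second-moment estimate on $\EE|{\bigcup_i T_i^m}|$, whereas you compute $\Pr_T[\exists i:\ S\subseteq T_i]$ for a fixed $S$ and average---a somewhat more elementary first-moment argument (note that your ``$\Pr_T[S\subseteq T_i]=2^{-m}$'' is only $2^{-m}(1-o_n(1))$, but the slack in $1-e^{-2/m}>1/m$ absorbs this for $n$ large).
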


The construction is based on the following claim.

\medskip

\begin{claim}
	\label{clm:setsTi}
	For a sufficiently large $n$, there are $M  = \Theta \left( \frac{ 2^m}{ m} \right)$ 
	subsets $T_1,\ldots,T_M$ of $[n]$ each of size at most $n/2$
	so that the $\DD$-measure of $T = \bigcup_i (T_i\times\{1\})^m$ is between
	$1/m$ and $4/m$.
\end{claim}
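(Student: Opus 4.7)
The plan is a probabilistic construction. I would sample $T_1,\dots,T_M$ independently and uniformly from the subsets of $[n]$ of size exactly $\lfloor n/2\rfloor$, with $M=\lceil 2\cdot 2^m/m\rceil\in\Theta(2^m/m)$, and show that with positive probability the resulting set $T=\bigcup_i(T_i\times\{1\})^m$ has $\mathcal{D}^m$-measure in $[1/m,4/m]$. Since $\mathcal{D}$ is essentially uniform on $[n]$ (with the label always $1$), $\mathcal{D}^m$ is uniform on $[n]^m$, which is the only distribution that matters below.

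The upper bound holds deterministically for every realization. Each $T_i$ has $|T_i|=\lfloor n/2\rfloor\le n/2$, so $\mathcal{D}^m\bigl((T_i\times\{1\})^m\bigr)=(|T_i|/n)^m\le 2^{-m}$, and the union bound yields $\mathcal{D}^m(T)\le M\cdot 2^{-m}\le 4/m$.

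For the lower bound I would compute the first moment. Expand $\mathcal{D}^m(T)=\sum_{s\in[n]^m}\mathcal{D}^m(\{s\})\,\mathbf{1}[s\in T]$ and take expectation over the random $T_i$'s. For a fixed $s\in[n]^m$ with $k:=|\{s_1,\dots,s_m\}|$ distinct coordinates, $p_k:=\Pr_{T_i}[s\in T_i]=\binom{n-k}{\lfloor n/2\rfloor-k}/\binom{n}{\lfloor n/2\rfloor}\ge 2^{-k}\bigl(1-O(k^2/n)\bigr)$, and by independence of the $T_i$'s, $\Pr[s\in T]=1-(1-p_k)^M\ge 1-\exp(-Mp_k)$. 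Restricting the outer sum to samples $s$ with $k=m$ — which carry a $1-O(m^2/n)$ fraction of the $\mathcal{D}^m$-mass — one has $Mp_m\ge (2/m)\bigl(1-o(1)\bigr)$ as $n\to\infty$, so each such $s$ contributes at least $1-\exp(-Mp_m)\ge 2/m-2/m^2-o(1/m)$, which exceeds $1/m$ for every $m\ge 4$ once $n$ is taken large enough. This yields $\mathbb{E}[\mathcal{D}^m(T)]\ge 1/m$, so some deterministic choice of the $T_i$'s simultaneously meets the lower bound and the deterministic upper bound.

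The main obstacle is purely quantitative: one must track that the lower-order errors from (i) replacing the hypergeometric probability $p_k$ by its limit $2^{-k}$ and (ii) discarding samples with repeated coordinates can be absorbed without losing the inequality $\mathbb{E}[\mathcal{D}^m(T)]\ge 1/m$. Both errors are of order $m^2/n$, so the hypothesis that $n$ is sufficiently large relative to $m$ is exactly what closes the argument; there is no substantive combinatorial difficulty beyond this tuning of constants.
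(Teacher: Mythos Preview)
Your proof is correct and uses the same probabilistic construction as the paper: independent uniformly random subsets $T_i$ of size $\lfloor n/2\rfloor$, followed by a first-moment argument to extract a good realization. The difference lies only in how the lower bound on $\mathbb{E}[\mathcal{D}^m(T)]$ is obtained. The paper uses a Bonferroni-type bound, computing $\mathbb{E}\bigl[|T_i^m\cap T_j^m|\bigr]$ for $i\neq j$ and arguing that $\mathbb{E}[|T|]\ge\sum_i|T_i^m|-\sum_{i<j}\mathbb{E}[|T_i^m\cap T_j^m|]$. You instead exploit the independence of the $T_i$'s directly to write $\Pr[s\in T]=1-(1-p_k)^M\ge 1-e^{-Mp_k}$ for each sample $s$, and then sum. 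This sidesteps the pairwise-intersection calculation entirely and is a bit more transparent; it also makes explicit that the upper bound $\mathcal{D}^m(T)\le M\cdot 2^{-m}\le 4/m$ holds deterministically for every realization, so only the lower bound requires the expectation argument (the paper's upper bound $Mk^m/n^m$ is likewise deterministic, though this is not emphasized). Both routes reach the same conclusion with the same asymptotics.
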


\begin{proof}
	Let $T_1,\ldots,T_M$ be i.i.d.~uniformly random 
	subsets of $\cX$, each of size $k = \lfloor n/2 \rfloor$.
	The size of each $T_i^m$ is $k^m$.
	For $i \neq j$ we have
	\begin{align*}
	\ex{}{|T_i^m \cap T_j^m|} 
	& = \ex{}{|T_i \cap T_j|^m} \\
	& = \sum_{x_1,\ldots,x_m} \ex{}{
		1_{x_1 ,\ldots,x_m \in T_i}}\ex{}{1_{x_1 ,\ldots,x_m \in T_j}} \\
	& \leq n^m \cdot\left( \frac{{n-m \choose k-m}}{{n \choose k}} \right)^2 + m^2 n^{m-1}\cdot 1,
	\end{align*}
	where in the last inequality, the first term corresponds to sequences $x_1,\ldots, x_m$ where all elements are distinct,
	and the second term corresponds to sequences $x_1,\ldots, x_m$ for which there are $i\neq j$ such that~$x_i = x_j$.
	Now, since
	$\frac{{n-m \choose k-m}}{{n \choose k}}
	= \frac{k(k-1)\cdots (k-m+1)}{n(n-1)\cdots (n-m+1)} 
	\leq \left(\frac{k}{n} \right)^m$,
	\begin{align*}
	\ex{}{|T_i^m \cap T_j^m|} 
	& \leq  \frac{k^{2m}}{n^m}  + m^2n^{m-1}.
	\end{align*}
	Therefore, since $\sum \lvert T_i^m\rvert \geq\lvert \cup_i T_i^m \rvert \geq \sum \lvert T_i^m\rvert - \sum_{i\neq j}\lvert T_i^m\cap T_j^m\rvert$,
	\begin{align*}
	M k^m\geq 
	\ex{}{ | T |}
	& \geq M k^m - \frac{M^2}{2}  \frac{k^{2m}}{n^m}
	- \frac{M^2}{2} m^{2} n^{m-1}  
	\geq \frac{M k^m}{2} - \frac{M^2}{2} m^{2} n^{m-1} ,
	\end{align*}
	as long as
	$M \leq n^m/k^m$.
	Hence, plugging $M  =  \frac{ 4n^m}{ m k^m}=\Theta(\frac{2^m}{m}) $
	yields that 
	\[\frac{4}{m} \geq \ex{}{\lvert T\rvert/ n^m} \geq \frac{2}{m}-o(1),\]
	where the $o(1)$ term approaches 0 as $n$ approaches $\infty$.
	So, for a sufficiently large $n$ there is a choice of $T_1,\ldots,T_M$ as claimed.
\end{proof}

\begin{proof}[Proof of Proposition~\ref{prop:Sharpness}]
	The algorithm $\cA$ is defined as follows.
	Let $T_1,\ldots ,T_M$ be the sets given in Claim~\ref{clm:setsTi}.
	For each $i \in [M]$, let $h_i$ be the hypothesis that is $1$
	on $T_i$ and $0$ elsewhere.
	Given a sample $((x_1,1),\ldots,(x_m,1))$,
	the algorithm outputs $h_i$, where
	$i$ is the minimum index so that $\{x_1,\ldots,x_m\} \subset T_i$;
	if no such index exists the algorithm outputs the all-ones function.
	
	The empirical error of the algorithm is $0$,
	and with probability $p \in [1/m,4/m]$
	it outputs a hypothesis with true error at least $1/2$.
	The amount of information it provides on its inputs can be bounded
	as follows: letting $p_i$ be the probability that the algorithm
	outputs $h_i$, we have
	\begin{align*}
	I(S;\cA(S)) 
	& = H(\cA(S)) \\
	& = (1-p) \log \frac{1}{1-p}
	+ p \sum_i \frac{p_i}{p} \log \frac{1}{p_i} \\
	& \leq (1-p) \log \frac{1}{1-p}
	+ p \log \frac{M}{p} \tag{convexity} \\
	& \leq O(1).
	\end{align*}
\end{proof}

\section{A Lower Bound on Information}\label{sec:lower}

In this section we show that any proper consistent learner for the class of thresholds cannot use only little information with respect to all realizable distributions $\DD$. Namely, we find for every such algorithm $\cA$ a realizable distribution $\DD$ so that $I(S;\cA(S))$ is large.

Let $\mathcal{X} = [2^n]$ and let $\mathcal{T}\subseteq \{0,1\}^\mathcal{X}$ be the set of all thresholds; that is $\mathcal{T} = \{f_k\}_{k \in [2^n]}$ where
\[
f_k(x) = 
\left\{
\begin{array}{ll}
0  & x < k ,\\
1 & x \geq k .
\end{array}
\right.
\]

\medskip

\begin{theorem}
	\label{thm:lower_bound}
	For any consistent and proper learning algorithm $\cA$ for $\mathcal{T}$ with sample size $m$ 
	there exists a {realizable} distribution $\DD = \DD(\cA)$ so that {
		\[
		I(S;\cA(S)) = \Omega \left( \frac{\log n}{m^2} \right) = \Omega \left( \frac{\log \log |\mathcal{X}|}{m^2} \right) ,
		\]
		where $S\sim \DD^m$}.
\end{theorem}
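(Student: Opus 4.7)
The plan is to take a white-box adversary approach: use the algorithm $\cA$ to construct a bad realizable distribution $\DD = \DD(\cA)$. The crucial structural property I would exploit is that, since $\cA$ is consistent and proper, for any realizable sample $S$ the output $\cA(S)$ must be a threshold $f_k$ with $k$ lying in the \emph{consistent interval} $(A_S, B_S]$, where $A_S$ is the largest $0$-labeled point in $S$ and $B_S$ is the smallest $1$-labeled point. For a simple two-point distribution $\DD_{a,b}$ (uniform on $\{a,b\}$ with $a$ labeled $0$ and $b$ labeled $1$), this means $\cA(S) \in (a, b]$ with probability $1$, and the entire randomness of $\cA(S)$ is driven by the composition of the sample (the counts of $a$s and $b$s) together with $\cA$'s internal randomness.

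First I would iteratively construct, for $i=1,\ldots,L$ with $L = \Theta(\log n)$, a nested sequence of intervals $(a_1,b_1] \supset (a_2,b_2] \supset \cdots \supset (a_L,b_L]$ in $[2^n]$ by a binary-search-style adversary. Having defined $(a_i,b_i]$, I would run $\cA$ on samples drawn from $\DD_{a_i,b_i}$ and examine the induced distribution of $\cA(S)$ over $(a_i,b_i]$; then choose $(a_{i+1},b_{i+1}]$ to be whichever half of $(a_i,b_i]$ receives at most half of the output mass. By pigeonhole, each step shrinks the interval by a factor of $2$, and at each level at least a constant-fraction of $\cA$'s mass is forced into the sub-interval we select (so moving from level $i$ to level $i+1$ "costs" the algorithm a constant amount of behavioral change).

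Next I would assemble a single realizable distribution $\DD$ supported on the endpoints $\{a_1,\ldots,a_L,b_L,\ldots,b_1\}$, labeled by a threshold $k^* \in (a_L,b_L]$, with weights chosen so that samples from $\DD$ couple to samples from each $\DD_{a_i,b_i}$ with a non-negligible probability. The key point is that $\cA$ must behave correctly at every scale: on the event that the sample of size $m$ hits exactly the pair $\{a_i, b_i\}$ (an event of probability $\Theta(1/m^2)$ for suitable weights), $\cA$'s output matches its behavior on $\DD_{a_i,b_i}$ and hence must localize into $(a_{i+1},b_{i+1}]$ with $\Omega(1)$ probability. Intuitively, across the $L$ levels this forces $\cA(S)$ to encode which endpoint pair was observed, which is $\Omega(\log n)$ bits of information about $S$.

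The main obstacle, and where I expect the bulk of the technical work, is turning the per-level $\Omega(1)$ bit of "behavioral change" into a contribution of $\Omega(1/m^2)$ bits to the total mutual information $I(S;\cA(S))$, and then summing these contributions across the $L$ levels via chain rule / conditional mutual information. Each level only "fires" on its isolating event, which has probability $\Theta(1/m^2)$, so a Fano-type or Le Cam-type inequality applied conditionally on this event should yield an $\Omega(1/m^2)$ contribution per level, summing to $\Omega(\log n / m^2)$. Verifying the coupling and controlling the dependencies between levels is delicate: the weights defining $\DD$ must be balanced so that each level's isolating event has the right probability while not destroying the contributions of other levels, and this trade-off is precisely what I would expect to force the $1/m^2$ denominator in the final bound.
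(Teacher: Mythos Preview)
Your nested-interval plan has a genuine gap at the step you yourself flag as ``the main obstacle,'' and it does not close. The isolating events $E_i$ (``the sample's support is exactly $\{a_i,b_i\}$'') are pairwise exclusive, and for $E_i$ to have probability $\Theta(1/m^2)$ you need $\Pr[S\subseteq\{a_i,b_i\}]=(\DD(a_i)+\DD(b_i))^m=\Omega(1/m^2)$, i.e.\ $\DD(a_i)+\DD(b_i)\geq m^{-2/m}=1-O(\tfrac{\log m}{m})$. This can hold for at most \emph{one} index $i$ under any single distribution $\DD$, so you cannot get $L=\Theta(\log n)$ such events simultaneously. Relatedly, your proposed accounting ``$\Omega(1/m^2)$ bits per level, summed over $L$ levels'' is not how mutual information decomposes over mutually exclusive events: the correct lower bound is $\Pr[\cup_i E_i]\cdot I(J;\cA(S)\mid \cup_i E_i)$ where $J$ records which $E_i$ fired, and this is at most $\Pr[\cup_i E_i]\cdot\log L$, not $L\cdot\Omega(1/m^2)$. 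Finally, even on $E_i$ the conditional law of $S$ is not $\DD_{a_i,b_i}^m$, so the half-mass conclusion you drew from the binary search on $\DD_{a_i,b_i}$ need not transfer.

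The paper's proof avoids all three issues by fixing \emph{one} endpoint and varying only the other. It considers, for each pair $i<j$, the output distribution of $\cA$ on the single concrete sample $\bigl((1,0),\ldots,(1,0),(i,0),(j,1)\bigr)$, organizes these distributions into a $2^n\times 2^n$ matrix, and proves (by a recursive quadrant argument) that some row $r$ contains $\Theta(n)$ columns $k_1,\ldots,k_{n/2}$ whose output distributions place mass $\geq 1/2$ on pairwise disjoint sets. The hard distribution then puts $1-\tfrac{1}{m-2}$ mass on the point $1$, $\tfrac{1}{2(m-2)}$ on $r$, and spreads the remaining $\tfrac{1}{2(m-2)}$ uniformly over the $k_i$'s. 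The \emph{single} event $E=\{S=((1,0),\ldots,(1,0),(r,0),(k_i,1))\text{ for some }i\}$ has probability $\Omega(1/m^2)$, and conditioned on $E$ the index $i$ is uniform over $\Theta(n)$ values while $\cA(S)$ is drawn from the $i$-th well-separated distribution; a direct Fano-type lemma then gives $I(i;\cA(S)\mid E)=\Omega(\log n)$, hence $I(S;\cA(S))\geq\Pr[E]\cdot\Omega(\log n)=\Omega(\log n/m^2)$. The key conceptual difference from your plan is that the $\Theta(n)$-way branching lives \emph{inside one event} rather than across many exclusive events, and the combinatorial work goes into finding that single good row rather than into balancing weights across nested levels.
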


The high-level approach is to identify in $\cA$ a rich enough structure and use it to define the distribution $\DD$. Part of the difficulty in implementing this approach is that we need to argue on a general algorithm, with no specific structure.
A different aspect of the difficulty in defining $\DD$ stems from that we can not adaptively construct $\DD$, we must choose it and then the algorithm gets to see many samples from it.

\subsection{Warm Up}

We first prove Theorem~\ref{thm:lower_bound} for the special case of deterministic learning algorithms. Let $\cA$ be a consistent deterministic learning algorithm for $\mathcal{T}$. Define the $2^n \times 2^n$ upper triangular matrix $M$ as follows. For all $i<j$, $$M_{ij}=k$$ where $f_k$ is the output of $\cA$ on a sample of the form 
\[
S_{ij} = \Big(\underbrace{(1,0), \dots, (1,0)}_{\text{ m-2 }}, (i,0), (j,1) \Big)
\]
and $M_{ij}=0$ for all $i \geq j$. 

The matrix $M$ summarizes the behavior of $\cA$ on some of its inputs. Our goal is to identify a sub-structure in $M$,
and then use it to define the distribution $\DD$.

We start with the following lemma.

\medskip

\begin{lem}\label{deteministic_induction_lemma}
	Let $Q \in\mathrm{Mat}_{2^{n} \times 2^{n}} (\mathbb{N})$ be a symmetric matrix that has the property that for all $i,j$:
	\[
	\min\{i,j\} \leq Q_{ij} \leq \max\{i,j\} \tag{i}
	\]
	Then $Q$ contains a row with at least $n+1$ different values (and hence also a column with $n+1$ different values).
\end{lem}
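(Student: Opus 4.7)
The plan is to prove the lemma by induction on $n$, partitioning $Q$ into four $2^{n-1} \times 2^{n-1}$ blocks and combining witnesses from two diagonal sub-blocks using symmetry. The base case $n=0$ is trivial: $Q$ is $1\times 1$ with entry $1$, giving $n+1 = 1$ distinct values. For the inductive step, let $k = 2^{n-1}$ and observe that the top-left block $Q_{\mathrm{TL}}$ on index set $\{1,\dots,k\}$ satisfies the same $\min \leq Q_{ij} \leq \max$ hypothesis, so all its entries lie in $[1,k]$. Likewise the bottom-right block $Q_{\mathrm{BR}}$ on index set $\{k+1,\dots,2k\}$ satisfies the hypothesis after shifting indices down by $k$; its entries all lie in $[k+1,2k]$. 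I would either apply the induction hypothesis directly to each of these $2^{n-1} \times 2^{n-1}$ blocks, or (cleaner) prove a slightly more general statement where the indexing is over any interval $\{a,a+1,\ldots,a+2^n-1\}$ with the natural min/max constraint, so that the recursion applies verbatim.

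By induction, there is a row index $i^\ast \in [1,k]$ such that the row $(Q_{i^\ast, j})_{j \leq k}$ attains at least $n$ distinct values, all in $[1,k]$. Symmetrically, there is an index $j^\ast \in \{k+1,\dots,2k\}$ such that the row $(Q_{j^\ast, j})_{j > k}$ attains at least $n$ distinct values, all in $[k+1,2k]$. Now examine the single off-diagonal entry $Q_{i^\ast,j^\ast}$. If $Q_{i^\ast,j^\ast} > k$, then row $i^\ast$ of $Q$ already contains the $n$ values from $Q_{\mathrm{TL}}$ (all $\leq k$) together with $Q_{i^\ast,j^\ast}$ (which is $> k$), giving at least $n+1$ distinct values. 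If instead $Q_{i^\ast,j^\ast} \leq k$, then by symmetry $Q_{j^\ast,i^\ast} = Q_{i^\ast,j^\ast} \leq k$, and row $j^\ast$ now contains the $n$ values from $Q_{\mathrm{BR}}$ (all $> k$) together with the value $Q_{j^\ast,i^\ast} \leq k$, again yielding at least $n+1$ distinct values. Either way, the conclusion for $n$ follows.

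The only potentially delicate point is the correct bookkeeping of the inductive hypothesis for the bottom-right block, since the natural reindexing $i \mapsto i - k$ must be applied both to the row/column labels and (to preserve the $\min/\max$ constraint) to the entries, which is a bijective renaming preserving distinctness. This is why formulating the lemma for index sets of the form $\{a,\dots,a+2^n-1\}$ (with constraint $\min(i,j) \leq Q_{ij} \leq \max(i,j)$) makes the induction essentially mechanical. The ``hard part'' is really conceptual rather than technical: recognizing that symmetry of $Q$ is what lets the single cross-block entry $Q_{i^\ast,j^\ast}$ serve simultaneously as a candidate new value for either row $i^\ast$ or row $j^\ast$, exactly one of which must succeed.
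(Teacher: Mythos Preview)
Your proposal is correct and follows essentially the same approach as the paper's proof: induct on $n$, apply the hypothesis to the two diagonal $2^{n-1}\times 2^{n-1}$ blocks (whose entries lie in disjoint intervals), and then use the single cross-block entry $Q_{i^\ast,j^\ast}$ to furnish a new value for one of the two witness rows. The only cosmetic differences are that the paper takes $n=1$ as its base case, handles the bottom-right block via the explicit shift $Q_4' = Q_4 - 2^{n-1}J$ rather than by re-indexing to a general interval, and phrases the second witness as a column of $Q_4$ rather than a row (equivalent by symmetry).
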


\begin{proof} The proof is by induction on $n$. In the base case $n=1$ we have
	$$Q = \begin{bmatrix}
	1 & x \\
	x & 2 
	\end{bmatrix}$$ and the lemma indeed holds. 
	
	For the induction step, 
	let \[
	Q = \begin{bmatrix}
	Q_1 & Q_2 \\
	Q_3 & Q_4 
	\end{bmatrix} \in \mathrm{Mat}_{2^{n+1} \times 2^{n+1}} (\mathbb{N})
	\]
	where $Q_1,Q_2,Q_3,Q_4 \in \mathrm{Mat}_{2^{n} \times 2^{n}} (\mathbb{N})$. 
	
	All the values in $Q_1$ are in the interval $[1,2^n]$ and $Q_1$ is also a symmetric matrix and satisfies property (i). So $Q_1$ contains some row $r$ with at least $n+1$ distinct values in the interval $[1,2^n]$. 
	
	Similarly, all the values in $Q_4$ are in the interval $[2^n+1, 2^{n+1}]$, and we can write $Q_4$ as
	\[
	Q_4 = Q_4' + 2^n\cdot J
	\]
	where $J$ is the all-1 matrix and $Q_4'=Q_4-(2^n\cdot J)$ is a symmetric matrix satisfying property (i). From the induction hypothesis it follows that $Q_4$ contains a column $k$ with at least $n+1$ different values in the interval $[2^n+1, 2^{n+1}]$.
	
	Now consider the value $Q_{r k}$. If $(Q_2)_{r k} \in [1, 2^n]$ then the column in $Q$ corresponding to $k$ contains $n+2$ different values. Otherwise, the row corresponding to $r$ contains $n+2$ different values. 
\end{proof}

Next, consider the matrix
\[
Q = M + M^t+\mathrm{diag}(1,2,3,\dots,2^n) .
\]
It is a symmetric matrix satisfying property (i), and so it contains a row $r$ with at least $n+1$ distinct values. If row $r$ contains at least $\frac{n}{2}$ distinct values above the diagonal 
then row $r$ in $M$ also contains $\frac{n}{2}$ distinct values above the diagonal. Otherwise, row $r$ contains at least $\frac{n}{2}$ distinct values below the diagonal, and then column $r$ in $M$ contains $\frac{n}{2}$ distinct values above the diagonal.

The proof proceeds by separately considering each of these two cases (row or column):

{\bf Case 1: $M$ contains a row $r$ with $\frac{n}{2}$ distinct values.} Let $k_1, \dots, k_{n/2}$ be columns such that all the values $M_{rk_i}$ are distinct. Let $\DD$ be the distribution that gives the point $1$ a mass of $1-\frac{1}{m-2}$, gives $r$ a mass of $\frac{1}{2(m-2)}$ and evenly distributes the remaining mass on the values $k_1,\ldots,k_{n/2}$. The function labeling the examples is chosen to be $f_{r+1} \in {\cal T}$.

Let $E$ be the indicator random variable of the event that the sample $S$
is of the form
\[
S  = \Big(\underbrace{(1,0), \dots, (1,0)}_{\text{ m-2 }}, (r,0), (k_i,1) \Big)
\]
for some $k_i$. The probability of $E=1$ is at least
\[
\Big(1-\frac{1}{m-2}\Big)^{m-2}\cdot \Big(\frac{1}{2(m-2)}\Big)^2  =
\Omega \left( \frac{1}{m^2} \right).
\]
From the definition of $M$, when $E=1$  the algorithm outputs $h=f_{M_{r k_i}}$ where $k_i$ is uniformly distributed uniformly over $\frac{n}{2}$ values, and so
$H(\cA(S)|E=1) = \log(n/2)$.
This yields 
\begin{align*}
I(S;\cA(S)) 
& = I(S,E;\cA(S)) \\ 
& \geq I(S;\cA(S)|E) \\ 
& = H(\cA(S)|E) \\
& \geq \pr{}{E=1}H(\cA(S)|E=1) \\
& \geq \Omega \left( \frac{ \log n}{m^2} \right).
\end{align*}

{\bf Case 2: $M$ contains a column $k$ with $\frac{n}{2}$ distinct values.} Let $r_1, ..., r_{n/2}$ be rows such that all the values $M_{r_ik}$ are distinct. Now, consider the event that
\[
S = \Big(\underbrace{(1,0), \dots, (1,0)}_{\text{ m-2 }}, (r_i,0), (k,1) \Big)
\]
for some $r_i$. The rest of the argument is the same as for the previous case.

\subsection{Framework for Lower Bounding Mutual Information}

Here we describe a simple framework that allows to lower bound the mutual information between two random variables.

\subsubsection*{Standard Lemmas\footnote{We include the proofs for completeness.}}

\begin{lem}\label{lower_bound_on_negative_mutual_information}
	For any two distribution $p$ and $q$, the contribution of the terms with $p(x)< q(x)$ to the divergence is at least $-1$:
	\[
	\sum_{x:p(x)< q(x)} p(x)\log \frac{p(x)}{q(x)} > -1.
	\]
\end{lem}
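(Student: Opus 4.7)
The plan is to bound each summand from below using a single calculus fact: the function $t \mapsto t \log t$ on $[0,1]$ (with $0 \log 0 = 0$) attains its minimum at $t = 1/e$ where it equals $-1/(e \ln 2) \approx -0.531$. In particular, for every $t \in [0,1]$ one has $t \log t \geq -1/(e \ln 2) > -1$.

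To apply this to the sum, I would rewrite each summand as
\[
p(x) \log \frac{p(x)}{q(x)} \;=\; q(x) \cdot \frac{p(x)}{q(x)} \log \frac{p(x)}{q(x)}.
\]
For $x$ in the set $A = \{x : p(x) < q(x)\}$ the ratio $t = p(x)/q(x)$ lies in $[0,1)$, so the calculus fact above gives
\[
p(x) \log \frac{p(x)}{q(x)} \;\geq\; -\frac{q(x)}{e \ln 2}.
\]
Summing over $x \in A$ and using that $q$ is a probability distribution (so $\sum_{x \in A} q(x) \leq 1$) yields
\[
\sum_{x \in A} p(x) \log \frac{p(x)}{q(x)} \;\geq\; -\frac{1}{e \ln 2} \;>\; -1,
\]
which is what is claimed (in fact a strictly stronger bound with an explicit constant).

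There is essentially no obstacle: the whole argument is the one-line calculus inequality combined with the observation that rescaling by $q(x)$ converts the per-term loss into something summable against the total mass of $q$. An equivalent route would be to first apply the log-sum inequality to get $\sum_{x \in A} p(x) \log \tfrac{p(x)}{q(x)} \geq p(A) \log \tfrac{p(A)}{q(A)}$, note that $p(A) \leq q(A) \leq 1$, and then invoke the same extremal fact $a \log(1/a) \leq 1/(e \ln 2)$ with $a = p(A)$; this packages the calculation slightly differently but is morally the same proof.
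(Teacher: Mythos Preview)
Your proof is correct. Your main route is a mild variant of the paper's argument: you apply the calculus fact $t\log t \geq -1/(e\ln 2)$ \emph{pointwise} (after factoring out $q(x)$) and then sum against the total $q$-mass of $A$, whereas the paper first aggregates via Jensen (equivalently, the log-sum inequality) to obtain $\sum_{x\in E} p(x)\log\tfrac{p(x)}{q(x)} \geq p(E)\log\tfrac{p(E)}{q(E)} \geq p(E)\log p(E)$ and only then invokes the same calculus fact once. Both arrive at the identical constant $-1/(e\ln 2)$. Your pointwise version is marginally more elementary since it sidesteps the convexity step entirely; the ``alternative route'' you sketch at the end is exactly the paper's proof.
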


\begin{proof}
	Let $E$ denote the subset of $x$'s for which $p(x) < q(x)$. Then we have
	\begin{align*}
	\sum_{x \in E} p(x)\log \frac{p(x)}{q(x)} 
	& \geq -p(E) \cdot \sum_{x \in E} p(x|E)\log \frac{q(x)}{p(x)} \\
	& \geq -p(E) \cdot \log \sum_{x \in E} p(x|E) \frac{q(x)}{p(x)}  \\
	& = -p(E) \cdot \log  \frac{q(E)}{p(E)} \\
	& \geq p(E) \cdot \log p(E). 
	\end{align*}
	For $0 \leq z \leq 1$, $z \log z$ is maximized when its derivative is $0$: $\log e + \log x = 0$. So the maximum is attained at $z = 1/e$, proving that $p(E) \log p(E) \geq \frac{- \log e}{e} > -1$.
\end{proof}

\medskip

\begin{lem}[Data processing]
	\label{lemma_conditional_independence_implies_lower_mutual_info}
	Let $X,Y,Z$ be random variables such that $X-Y-Z$ form a Markov chain;
	that is, $X$ and $Z$ are independent conditioned on $Y$. Then
	\[
	I(X;Y) \geq I(X;Z)
	\]
\end{lem}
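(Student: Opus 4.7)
The plan is to use the chain rule for mutual information applied in two different orders, together with the non-negativity of (conditional) mutual information and the conditional independence assumed by the Markov property. The chain rule gives
\[
I(X;Y,Z) = I(X;Y) + I(X;Z \mid Y) = I(X;Z) + I(X;Y \mid Z).
\]
First I would invoke the hypothesis that $X - Y - Z$ is a Markov chain, which by definition says that $X$ and $Z$ are conditionally independent given $Y$; this translates directly into $I(X;Z\mid Y)=0$. Substituting this into the first decomposition leaves $I(X;Y,Z) = I(X;Y)$.

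Next I would equate this with the second decomposition to get $I(X;Y) = I(X;Z) + I(X;Y\mid Z)$. Since conditional mutual information is always non-negative (it is an average over $z$ of KL-divergences, each of which is non-negative), we conclude $I(X;Y) \geq I(X;Z)$, which is the desired inequality.

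The only real step to justify carefully is the chain rule itself and the non-negativity of $I(X;Y\mid Z)$, both of which are standard and were quoted in the preliminaries. The identity $I(X;Z\mid Y)=0$ follows immediately from writing $I(X;Z\mid Y) = \mathbb{E}_Y\bigl[\kl(p_{X,Z\mid Y} \,\|\, p_{X\mid Y}\cdot p_{Z\mid Y})\bigr]$ and noting that the Markov condition makes the conditional joint equal to the product of conditional marginals. There is no real obstacle here; the proof is essentially one application of the chain rule in two directions, so the write-up should be short.
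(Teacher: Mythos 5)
Your proof is correct and is essentially the same argument as the paper's: the paper's single identity $I(X;Y) = I(X;Z) + I(X;Y\mid Z) - I(X;Z\mid Y)$ is exactly your two chain-rule decompositions of $I(X;Y,Z)$ combined, after which both arguments use the Markov condition to kill $I(X;Z\mid Y)$ and non-negativity of $I(X;Y\mid Z)$ to conclude.
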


\begin{proof}
	The chain rule for mutual information yields that
	\begin{align*}
	I(X;Y) 
	& = I(X;Z) + I(X;Y|Z) - I(X;Z|Y) \tag{chain rule} \\
	& = I(X;Z) + I(X;Y|Z) \tag{$X-Y-Z$} \\
	& \geq I(X;Z) \tag{information is non-negative}.
	\end{align*}
\end{proof}

\subsubsection*{The Framework}

The following lemma is the key tool in proving a lower bound on the mutual information.

\medskip

\begin{lem}\label{lemma_probabilities}
	Let $n\in\mathbb{N}$, and let $p_1, \dots,p_n$ be probability distributions over the set $[n]$ such that for all $i \in [n]$,
	\[
	p_i(i)\geq \frac{1}{2} \tag{iv}
	\]
	Let $U$ be a random variable distributed uniformly over $[n]$. Let $T$ be
	a random variable over $[n]$ that results from sampling an index $i$ according to $U$ and then sampling an element of $[n]$ according to $p_i$.
	Then 
	\[
	I(U; T) = \Omega(\log n).
	\]
\end{lem}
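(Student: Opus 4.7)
The key intuition is that observing $T$ already nearly determines $U$: since each $p_i$ places mass at least $1/2$ on $i$, using $T$ itself as a guess for $U$ is correct with probability at least $1/2$, which forces $H(U \mid T)$ to be bounded well below $H(U) = \log n$. So the plan is to pin down a good predictor of $U$ from $T$ and then feed that into an information-theoretic conditional-entropy bound.

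Concretely, I would first compute
\[
\Pr[T = U] \;=\; \frac{1}{n}\sum_{i=1}^n \Pr[T = i \mid U = i] \;=\; \frac{1}{n}\sum_{i=1}^n p_i(i) \;\geq\; \frac{1}{2}
\]
directly from assumption (iv). Treating $T$ as a (suboptimal) predictor of $U$, the error probability $P_e := \Pr[T \neq U]$ is therefore at most $1/2$. I would then invoke Fano's inequality to get
\[
H(U \mid T) \;\leq\; H_2(P_e) + P_e \cdot \log(n-1) \;\leq\; 1 + \tfrac{1}{2}\log(n-1),
\]
and conclude $I(U;T) = H(U) - H(U \mid T) \geq \log n - 1 - \tfrac{1}{2}\log(n-1) = \Omega(\log n)$.

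If one prefers to stay strictly within the standard lemmas of this subsection rather than cite Fano, a parallel route is to write $I(U;T) = \frac{1}{n}\sum_{i=1}^n \kl(p_i \,\|\, q)$, where $q(t) = \frac{1}{n}\sum_j p_j(t)$ is the marginal of $T$. Since $\sum_t q(t) = 1$, at least $n/2$ indices $i$ satisfy $q(i) \leq 2/n$; for each such $i$, the single summand $t = i$ contributes $p_i(i)\log\bigl(p_i(i)/q(i)\bigr) \geq \tfrac{1}{2}\log(n/4)$, while Lemma~\ref{lower_bound_on_negative_mutual_information} bounds the sum of all negative summands in $\kl(p_i \,\|\, q)$ by $-1$. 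Averaging the resulting lower bound over these indices (and using $\kl \geq 0$ for the rest) again yields $I(U;T) = \Omega(\log n)$. The ``hard part'' is really just the conceptual step of recognizing $T$ itself as an effective predictor of $U$; once that is in hand, either approach is a short standard calculation with no serious obstacle.
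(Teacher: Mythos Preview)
Your proof is correct. The Fano-inequality route is a genuinely different (and arguably cleaner) argument than the paper's: the paper expands $I(U;T)$ as a double sum, isolates the diagonal terms $\frac{1}{n}\sum_i p_i(i)\log\frac{p_i(i)}{p_T(i)}$, lower-bounds them by $\frac{1}{2}\log(n/2)$ via the log-sum inequality, and then controls all negative off-diagonal contributions by Lemma~\ref{lower_bound_on_negative_mutual_information}. Your Fano argument bypasses this decomposition entirely by recognizing that $T$ itself is a $\tfrac{1}{2}$-accurate predictor of $U$, which immediately caps $H(U\mid T)$; this is shorter and uses only a textbook inequality, at the cost of importing Fano from outside the paper's self-contained toolkit. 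Your second ``alternative'' route is close in spirit to the paper's proof (both use the KL expansion and Lemma~\ref{lower_bound_on_negative_mutual_information}), though you handle the diagonal differently: rather than applying the log-sum inequality to the full diagonal sum, you use Markov on the marginal $q$ to isolate the $\geq n/2$ indices with $q(i)\leq 2/n$ and bound those summands individually. This costs you a constant factor (you get roughly $\tfrac{1}{4}\log n$ versus the paper's $\tfrac{1}{2}\log n$) but avoids the log-sum step.
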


\begin{proof}
	\begin{align*}
	I(U; T)
	& =\sum_{i=1}^n \sum_{t=1}^n p_U(i) p_i(t)\log\frac{p_i(t)}{p_T(t)} \\
	& =\sum_{i} p_U(i) p_i(i)\log\frac{p_i(i)}{p_T(i)}+\sum_{i} \sum_{t\neq i} p_U(i) p_i(t)\log\frac{p_i(t)}{p_T(t)} .
	\end{align*}
	Consider the first sum (the ``diagonal"):
	\begin{align*}
	\sum_{i} p_U(i) p_i(i)\log\frac{p_i(i)}{p_T(i)} 
	& = \frac{1}{n}\sum_{i} p_i(i)\log\frac{p_i(i)}{p_T(i)} \\
	& \geq \frac{1}{n}\sum_{i} \frac{1}{2}\log\frac{\frac{1}{2}}{p_T(i)}  \tag{iv} \\
	& \geq \frac{1}{n} \cdot \frac{n}{2}\log\frac{\frac{n}{2}}{1} \tag{log-sum inequality} \\
	& = \frac{1}{2} \log \frac{n}{2} .
	\end{align*}
	
	Finally, Lemma \ref{lower_bound_on_negative_mutual_information} 
	implies that the second sum (the ``off-diagonal") 
	is at least $-1$. 
\end{proof}

We generalize the previous lemma as follows.

\medskip

\begin{lem}\label{lemma_probabilities_extended}
	Let $p_1, \dots,p_n$ be probability distributions over $\cX$.
	Let $S_1, \dots, S_n \subset \cX$ be pairwise disjoint events such that
	for all $i \in [n]$,
	\[ p_i(S_i)\geq \frac{1}{2} \tag{v}
	\]
	Let $U$ be a random variable distributed uniformly over $[n]$. Let $W$ be
	a random variable taking values in $\cX$ that results from sampling an index $i$ according to $U$ and then sampling an element of $\cX$ according to $p_i$.
	Then,
	\[
	I(U; W) = \Omega(\log n).
	\]
\end{lem}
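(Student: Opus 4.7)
The plan is to reduce the extended lemma to Lemma~\ref{lemma_probabilities} via a data processing step. Since the events $S_1,\ldots,S_n$ are pairwise disjoint, I can define a deterministic map $\phi:\cX\to [n+1]$ by setting $\phi(x)=i$ whenever $x\in S_i$ and $\phi(x)=n+1$ for every $x\notin\bigcup_{i}S_i$. Then let $T=\phi(W)$, so that $U-W-T$ is a Markov chain (since $T$ is a deterministic function of $W$). By the data processing inequality (Lemma~\ref{lemma_conditional_independence_implies_lower_mutual_info}),
\[
I(U;W)\ \geq\ I(U;T),
\]
and it therefore suffices to lower bound $I(U;T)$.

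Now, conditioned on $U=i$, the variable $T$ has distribution $q_i$ on $[n+1]$ satisfying $q_i(i)=p_i(S_i)\geq 1/2$ by hypothesis (v). This is essentially the setup of Lemma~\ref{lemma_probabilities}, except that the distributions $q_i$ are supported on $[n+1]$ rather than $[n]$. Inspecting the proof of Lemma~\ref{lemma_probabilities}, the enlargement of the support by one symbol does not affect the argument: the ``diagonal'' sum $\sum_{i=1}^n p_U(i)q_i(i)\log\frac{q_i(i)}{p_T(i)}$ is still bounded from below by $\tfrac{1}{2}\log(n/2)$ using $q_i(i)\geq 1/2$ and the log-sum inequality (applied to $n$ indices with $p_T(i)$ summing to a quantity at most $1$), while the off-diagonal sum, which now ranges over pairs $(i,t)$ with $t\in [n+1]\setminus\{i\}$, is bounded from below by $-1$ by Lemma~\ref{lower_bound_on_negative_mutual_information}. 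Hence $I(U;T)=\Omega(\log n)$.

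Combining the two bounds gives $I(U;W)\geq I(U;T)=\Omega(\log n)$, which is the claim. The only real content is the construction of $\phi$: the pairwise disjointness of the $S_i$ is precisely what makes $\phi$ well-defined, so the reduction to the case where each ``bucket'' is a single point becomes lossless for the purposes of the lower bound. I do not anticipate any serious obstacle; the only thing to double-check is that the proof of Lemma~\ref{lemma_probabilities} is not sensitive to the exact size of the support, which as noted above it is not.
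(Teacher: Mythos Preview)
Your proposal is correct and is essentially the same as the paper's proof: the paper also defines a complementary event $S_0=\cX\setminus\bigcup_i S_i$, sets $T$ to be the index of the $S_i$ containing $W$, invokes the data processing inequality on the Markov chain $U-W-T$, and then appeals to Lemma~\ref{lemma_probabilities}. The only difference is cosmetic (you send the leftover to the symbol $n{+}1$ rather than $0$), and you are slightly more explicit than the paper in noting that the proof of Lemma~\ref{lemma_probabilities} is insensitive to the extra symbol in the support.
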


\begin{proof}
	Let $S_0 = \cX \setminus (S_1 \cup \cdots \cup S_n)$, and let 
	$T$ be the random variable taking values in $\NN$
	defined by $T(W) = i$ iff $W \in A_i$.
	Hence,
	$T$ satisfies the conditions of Lemma~\ref{lemma_probabilities}. Furthermore, we have that the random variables $U,W,T$ form the following Markov chain:
	$U-W-T$.
	We thus conclude that
	\[
	I(U;W) \geq I(U;T) = \Omega(\log n)
	\]
	where the inequality is according to the data processing inequality
	(Lemma~\ref{lemma_conditional_independence_implies_lower_mutual_info}) 
	and the equality follows from Lemma~\ref{lemma_probabilities}. 
\end{proof}

\subsection{Proof for General Case}

We start with the analog of Lemma~\ref{deteministic_induction_lemma} from the warm up.
Let $\Delta([2^n])$ be the set of all probability distribution over $[2^n]$. Let $Q\in\mathrm{Mat}_{2^{n} \times 2^{n}} (\Delta([2^n]))$, i.e., $Q$ is a $2^{n} \times 2^{n}$ matrix where each cell contains a probability distribution.

\medskip

\begin{lem}\label{non_deteministic_induction_lemma}
	Assume that $Q$ is symmetric and that it has the property that for all~$i,j$,
	\[
	\mathrm{supp}( Q_{ij}) \subseteq [\min\{i,j\}, \max\{i,j\}] \tag{ii} .
	\]
	Then, $Q$ contains a row with $n+1$ distributions $p_1, \dots, p_{n+1}$ such that there exist pairwise disjoint sets $S_1, \dots,S_{n+1} \subset [2^n]$ so that for all $i \in [n+1]$,
	\[
	p_i(S_i) \geq \frac{1}{2} \tag{iii}
	\]
	(and hence it also contains such a column).
\end{lem}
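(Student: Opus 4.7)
The plan is to mirror the deterministic induction in Lemma~\ref{deteministic_induction_lemma} almost verbatim, after fixing the correct distributional analogue of ``distinct values.'' The useful analogue is precisely the condition required by Lemma~\ref{lemma_probabilities_extended}: we will call distributions $p_1,\dots,p_k$ \emph{distinguishable} if there exist pairwise disjoint sets $S_1,\dots,S_k$ with $p_i(S_i) \geq 1/2$. The induction is on $n$; the inductive step decomposes $Q$ into four $2^n \times 2^n$ blocks $Q_1, Q_2, Q_3, Q_4$, applies the inductive hypothesis separately to the symmetric diagonal blocks $Q_1$ and $Q_4$, and then uses a single carefully chosen entry of $Q_2$ to extend either a row or a column from $n+1$ to $n+2$ distinguishable distributions.

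In more detail, applying the inductive hypothesis to $Q_1$ yields a row $r \in [1,2^n]$ carrying $n+1$ distinguishable distributions $p_1,\dots,p_{n+1}$ (each supported in $[1,2^n]$ by property (ii)) together with pairwise disjoint witness sets $S_1,\dots,S_{n+1} \subseteq [1,2^n]$. Shifting indices of $Q_4$ by $-2^n$ produces another symmetric $2^n \times 2^n$ matrix satisfying property (ii), so, invoking the ``column'' parenthetical of the inductive hypothesis (which follows from the symmetry of $Q$), we also obtain a column $k \in [2^n+1, 2^{n+1}]$ of $Q_4$ with $n+1$ distinguishable distributions and pairwise disjoint witness sets $S'_1,\dots,S'_{n+1} \subseteq [2^n+1, 2^{n+1}]$. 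Now focus on the single entry $Q_{rk}$, which belongs to $Q_2$; by property (ii), its support lies in $[r,k] = [r,2^n] \cup [2^n+1,k]$, so one of the two halves $[1,2^n]$ or $[2^n+1,2^{n+1}]$ carries $Q_{rk}$-mass at least $1/2$. If the upper half does, append $Q_{rk}$ with witness set $[2^n+1,2^{n+1}]$ to row $r$; if the lower half does, append $Q_{rk}$ with witness set $[1,2^n]$ to column $k$. In either case the new witness set is automatically disjoint from the previously chosen ones, since those sit in the opposite half of $[2^{n+1}]$.

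The base case $n=1$ is a direct check: $Q_{11} = \delta_1$, $Q_{22} = \delta_2$, and for $Q_{12} = Q_{21}$ at least one of $Q_{12}(\{1\})$ and $Q_{12}(\{2\})$ is at least $1/2$, so either row $1$ works (with witness sets $\{1\},\{2\}$) or, using symmetry, row $2$ works (with witness sets $\{1\},\{2\}$). I expect the only real obstacle to be bookkeeping: carefully tracking which half of the domain each witness set lives in, so that the newly appended $Q_{rk}$ is guaranteed disjoint from the inductively produced $S_i$ or $S'_i$. Property (ii) does all of the heavy lifting here, since it simultaneously forces the supports of the $Q_1$-distributions and the $Q_4$-distributions into opposite halves of $[2^{n+1}]$ and forces $Q_{rk} \in Q_2$ to straddle exactly these two halves, making the case split totally clean.
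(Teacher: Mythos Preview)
Your proposal is correct and follows essentially the same argument as the paper's proof: decompose $Q$ into four blocks, apply the inductive hypothesis to $Q_1$ (row $r$, witness sets in $[1,2^n]$) and to $Q_4$ (column $k$, witness sets in $[2^n+1,2^{n+1}]$), and then use the mass of $Q_{rk}$ on the two halves of $[2^{n+1}]$ to decide whether to extend the row or the column. The paper is terser about the index shift for $Q_4$ and the base case, but the structure and the key observation (that the new witness set lies in the opposite half and is therefore disjoint from the inductively obtained ones) are identical.
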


\begin{proof}
	Again, the proof is by induction on $n$. 
	The base case is easily verified.
	For the step,
	let \[
	Q = \begin{bmatrix}
	Q_1 & Q_2 \\
	Q_3 & Q_4 
	\end{bmatrix}
	\]
	where $Q_1,Q_2,Q_3,Q_4\in\mathrm{Mat}_{2^n\times2^n}(\Delta([2^n]))$. 
	
	The matrix $Q_1$ is symmetric, it satisfies property (ii), and that the supports of all the entries in $Q_1$ are contained in $[2^n]$. So by induction $Q_1$ contains a row $r$ with probability functions $p_1,\dots,p_{n+1}$ and pairwise disjoint sets $A_1,\dots,A_{n+1} \subset [2^n]$ that satisfy property (iii).
	
	Similarly, 
	one sees that $Q_4$ satisfies property (iii) as well. Namely, $Q_4$ contains a column $k$ with probabilities $q_1,\dots,q_{n+1}$ and pairwise disjoint sets $B_1,\dots,B_{n+1} \subset [2^n+1,2^{n+1}]$ with $q_i(B_i)\geq \frac{1}{2}$ for all $i$.
	
	We now consider the probability distribution $Q_{rk}$. If $Q_{rk}([2^n])\geq \frac{1}{2}$ then we define $q_{n+2}=Q_{rk}$ and $B_{n+2}=[2^n]$. Thus, column $k$ of $Q$ satisfies property (iii) with probabilities $q_1,\dots,q_{n+2}$ and sets $B_1,\dots,B_{n+2}$. Otherwise, we choose $p_{n+2}=Q_{rk}$ and $A_{n+2}=[2^n+1,2^{n+1}]$. In this case, the row of $Q$ corresponding to $r$ satisfies property (iii) with probabilities $p_1,\dots,p_{n+2}$ and sets $A_1,\dots,A_{n+2}$. 
\end{proof}

In the previous section we proved that property (iii) yields a lower bound on mutual information.
We are prepared to prove the desired lower bound for probabilistic algorithms.

\begin{proof}[Proof of Theorem~\ref{thm:lower_bound}]
	Let $\cA$ be a consistent and proper learning algorithm for $\mathcal{T}$.
	Let $M \in \mathrm{Mat}_{2^n\times2^n}(\Delta([2^n]))$ be the matrix
	whose $(i,j)$ entry for $i \neq j$ is
	
	the probability distribution $\cA$ induces on ${\cal T}$ with input
	
	\[
	\Big(\underbrace{(1,0), \dots, (1,0)}_{\text{ m-2 }}, (\min\{i,j\},0), (\max\{i,j\},1) \Big)
	\]
	and whose $(i,i)$ entry for all $i$ is the degenerate distribution that assigns probability 1 to $i$.
	The matrix $M$ is symmetric, and because $\cA$ is consistent and proper it follows that $M$ satisfies property (ii). By Lemma~\ref{non_deteministic_induction_lemma} therefore $M$ contains a row $r$ with probabilities $p_1,\dots,p_{n+1}$ for which there are pairwise disjoint sets $S_1,\dots,S_{n+1} \subseteq [2^n]$ such that $p_i(S_i) \geq \frac{1}{2}$ for all $i$. 
	
	We assume without loss of generality that the probabilities $p_1,\dots,p_{n/2}$ are located on row $r$ above the diagonal in cells $(r,k_1),\dots,(r,k_{n/2})$; the symmetric case can be handled similarly.
	
	We now construct the probability $\DD$ over $\cX$.
	The probability of $1$ is $1-\frac{1}{m-2}$, the probability of $r$ is $\frac{1}{2(m-2)}$ and rest of the mass
	is uniformly distributed on $k_1,\ldots,k_{n/2}$.\footnote{We assume for simplicity that $1$, $r$ and the $k_i$ are all distinct.} Consider the indicator random variable $E$ of the event that $S$ is of the form
	\[
	S = \Big(\underbrace{(1,0), \dots, (1,0)}_{\text{ m-2 }}, (r,0), (k_i,1) \Big)
	\]
	for some $k_i$. For every $S$ in this event, denote by $i_S$ the index so that
	$k_{i_S}$ is in the last example of $S$. Again, $\Pr[E=1] \geq \Omega(1/m^2)$. Finally, as in the warm up,
	\begin{align*}
	I(S;\cA(S)) 
	& \geq I(S;\cA(S)|E) \\
	& \geq \Pr[E=1]\cdot I(S;\cA(S)|E=1) \\
	& \geq \Pr[E=1]\cdot I(i_S;\cA(S)|E=1) \\
	& \stackrel{(*)}{\geq} \Omega \left(\frac{\log n}{m^2} \right) ,
	\end{align*}
	where $(*)$ is justified as follows:
	Given that $E=1$, we know that $i_S$ is uniformly distributed on $[n/2]$. Furthermore, $\cA(S)$ is the result of sampling a hypothesis according to the distribution $p_{i_S}$.
	The lower bound hence follows from Lemma~\ref{lemma_probabilities_extended}.
	%
\end{proof}

\section{Approaches for Proving Upper Bounds on Information}\label{sec:upper-approaches}

Here we give a simple and generic information learner 
that provides sharp upper bounds for some basic concept classes. 
For example, we get an ERM that is a $O(\log\log N)$-bit information learner for the class of thresholds over $[N]$; this bound is indeed tight given the lower bound in Section~\ref{sec:lower}. 

\medskip

\begin{definition}[Generic information learner]
	The generic algorithm $\cA_{\hh}$ for a {hypothesis} class $\hh$ acts as follows.
	Given a {realizable} sample $S$, the algorithm first 
	finds the set of all hypotheses in $\hh$ that are consistent with $S$, and
	then it simply outputs a uniformly random hypothesis from that set.
\end{definition}

This algorithm is well defined for finite $\hh$ and it is proper and consistent.
Specifically, if $\hh$ has finite VC-dimension then $\cA_\hh$ PAC-learns the class with the standard sample complexity of VC classes (when there are no information constraints).

\subsection{A Method for Upper Bounding Information}\label{sec:upper-method}

The following simple lemma provides a useful method for upper bounding mutual information.
Let $\hh\subseteq \{0,1\}^{\xx}$ and $m\in \NN$. Let $\cA: \left(\xx\times\{0,1\}\right)^m\rightarrow \hh$ be an algorithm that takes a sample $S$ of $m$ examples and outputs a hypothesis $h = \cA(S) \in\hh$.
Recall that $P_{h|S}$ denotes the distribution of $\cA(S)$ when the input sample is $S$,
and that
$P_{h}(\cdot)=\ex{S\sim\DD^m}{P_{h|S}(\cdot)}$ denotes the marginal distribution of the output of $\cA$ with respect to some fixed input distribution $\DD$.

\medskip

\begin{lem}\label{lem:upper_bd_mut_inf}
	For every distribution $\qq$ over $\hh$, we have 
	$$I(S;\cA(S)) = \ex{S\sim \DD^m}{\kl\left(P_{h|S} ~ || ~P_{h}\right)}\leq \max\limits_{S\in\supp(\DD^m)}\kl\left(P_{h|S}~ ||~ \qq \right).$$
\end{lem}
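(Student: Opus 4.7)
The plan is to split the statement into two pieces and handle each separately. The equality $I(S;\cA(S)) = \ex{S\sim\DD^m}{\kl(P_{h|S}||P_h)}$ is a standard identity: starting from the representation $I(S;\cA(S)) = \kl(p_{S,\cA(S)} || \DD^m\cdot P_h)$ recalled in the preliminaries, I would expand the KL, factor the joint probability of $(S,f)$ as $\DD^m(S)\, P_{h|S}(f)$, and pull the sum over $f$ inside the outer expectation over $S$ to recover $\ex{S}{\kl(P_{h|S}||P_h)}$. This step is purely mechanical and is a textbook identity.

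For the inequality I would proceed in two stages. The main step is a ``barycenter'' computation: for any fixed distribution $\qq$ on $\hh$,
\[
\ex{S\sim\DD^m}{\kl(P_{h|S}||\qq)} - \ex{S\sim\DD^m}{\kl(P_{h|S}||P_h)} \;=\; \sum_f P_h(f)\log\frac{P_h(f)}{\qq(f)} \;=\; \kl(P_h||\qq) \;\geq\; 0,
\]
where the first equality follows by linearity of expectation combined with the defining identity $P_h(f) = \ex{S}{P_{h|S}(f)}$, and the inequality is the non-negativity of KL. This expresses the classical fact that $P_h$ is the unique minimizer of $\qq \mapsto \ex{S}{\kl(P_{h|S}||\qq)}$ (KL as a Bregman divergence has its barycenter at the average measure). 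Once this is in hand, the last step is the trivial bound of an average by a maximum:
\[
\ex{S\sim\DD^m}{\kl(P_{h|S}||P_h)} \;\leq\; \ex{S\sim\DD^m}{\kl(P_{h|S}||\qq)} \;\leq\; \max_{S\in\supp(\DD^m)} \kl(P_{h|S}||\qq).
\]

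I do not expect any real obstacle here: the equality is textbook information theory and the inequality is a one-line consequence of the barycenter identity, for which the only input is that $P_h$ is the $\DD^m$-mixture of the $P_{h|S}$'s. The only minor issue to watch is well-definedness of the divergences; if $\qq$ fails to dominate some $P_{h|S}$ on the right-hand side then $\max_S \kl(P_{h|S}||\qq) = \infty$ and the bound is vacuous, so no care is actually needed in the statement.
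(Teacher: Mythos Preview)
Your proposal is correct and is essentially the same argument as the paper's. The paper writes the key step as the single identity $I(S;\cA(S)) = \ex{S}{\kl(P_{h|S}\,\|\,\qq)} - \kl(P_h\,\|\,\qq)$ and then drops the non-negative term, whereas you first record $I(S;\cA(S)) = \ex{S}{\kl(P_{h|S}\,\|\,P_h)}$ and then prove the barycenter identity $\ex{S}{\kl(P_{h|S}\,\|\,\qq)} - \ex{S}{\kl(P_{h|S}\,\|\,P_h)} = \kl(P_h\,\|\,\qq)$; these are the same computation rearranged, and both finish by bounding the expectation by the maximum (with the same remark about the infinite-KL case).
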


\begin{proof}
	Observe that for any distribution $\qq$ over $\hh$,
	\begin{align}
	I(S;\cA(S))  & = \ex{S\sim \DD^m}{\kl\left(P_{h|S} ~ || ~\qq\right)} - \kl\left(P_{h} ~ || ~\qq\right) \nonumber \\
	&\leq \ex{S\sim \DD^m}{\kl\left(P_{h|S} ~ || ~\qq \right)}\nonumber\\
	&\leq \max\limits_{S\in\supp(\DD^m)}\kl\left(P_{h|S}~ ||~ \qq \right)
	\nonumber  
	\end{align}
	(if the KL-divergence is infinite, the upper bound trivially follows).
\end{proof}

\subsection{Examples}

We demonstrate the behavior of the generic algorithm
in a couple of simple cases.

\subsubsection*{Thresholds}
Consider the class of thresholds ${\cal T}$
over $\xx = [N]$ and the generic algorithm $\cA_{\cal T}$.
\medskip

\begin{theorem}\label{thm:thresh}
	For every sample size,
	the generic algorithm $\cA = \cA_{\cal T}$ is a $(\log\log(N)+O(1))$-bit information learner.
\end{theorem}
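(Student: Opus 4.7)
The plan is to apply Lemma~\ref{lem:upper_bd_mut_inf} with a reference distribution $\qq$ on $\mathcal{T}$ that is allowed to depend on the realizable distribution $\DD$---equivalently, on the target threshold $k^*$ determined by $\DD$. First I would record the key combinatorial fact about thresholds: for any $\mathcal{T}$-realizable sample $S$ (with target $f_{k^*}$), the set of consistent thresholds is an interval $I(S) = [a+1, b] \subseteq [1,N]$ containing $k^*$, where $a$ is the largest $0$-labelled point in $S$ (or $0$ if none) and $b$ is the smallest $1$-labelled point (or $N$ if none). Consequently $P_{h|S}$ is the uniform distribution on $I(S)$, of length $L := b - a$.

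Next I would build $\qq$ as a uniform mixture over $J := \lceil \log_2 N\rceil + 1$ \emph{dyadic windows centered at $k^*$}:
\[
\qq \;=\; \frac{1}{J}\sum_{j=0}^{J-1} \qq^{(j)}, \qquad \qq^{(j)} \;=\; \mathrm{Uniform}\bigl(I^{(j)}\bigr), \qquad I^{(j)} := [\max(1,k^* - 2^j),\,\min(N,k^* + 2^j)] .
\]
The intuition is that among these $O(\log N)$ scales one is ``tuned'' to $I(S)$: taking $j^\star := \lceil \log_2 L\rceil$, the bounds $k^* - a - 1 \le L - 1 < 2^{j^\star}$ and $b - k^* \le L - 1 < 2^{j^\star}$ give $I(S) \subseteq I^{(j^\star)}$ and $|I^{(j^\star)}| \le 2\cdot 2^{j^\star} + 1 \le 4L + 1$.

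Given this, the calculation is short. One has $\kl(P_{h|S}\,||\,\qq^{(j^\star)}) = \log(|I^{(j^\star)}|/L) \le \log 5$, and the standard mixture inequality $\kl(P\,||\,\tfrac1J\sum_j Q_j) \le \kl(P\,||\,Q_{j^\star}) + \log J$ then yields
\[
\max_S\; \kl\bigl(P_{h|S}\,||\,\qq\bigr) \;\le\; \log 5 + \log J \;=\; \log\log N + O(1).
\]
Lemma~\ref{lem:upper_bd_mut_inf} immediately delivers $I(S;\cA(S)) \le \log\log N + O(1)$.

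The main conceptual obstacle is recognising that no reference distribution $\qq$ \emph{independent of $\DD$} can succeed: any threshold $k \in [N]$ can be the unique consistent hypothesis (take the sample whose informative points are $(k-1,0)$ and $(k,1)$), which would force $q_k = \Omega(1/\log N)$ for every $k \in [N]$, contradicting $\sum_k q_k \le 1$. The usable freedom is that Lemma~\ref{lem:upper_bd_mut_inf} only fixes $\qq$ with respect to $S$, so $\qq$ may depend on $\DD$; centering the dyadic mixture at $k^*$ exploits exactly this freedom. Once this is noticed, the interval-inclusion check and the mixture-KL bound are routine.
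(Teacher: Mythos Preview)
Your proof is correct and follows the same high-level strategy as the paper: apply Lemma~\ref{lem:upper_bd_mut_inf} with a reference distribution $\qq$ centered at the target threshold $k^*$, exploiting that $P_{h|S}$ is uniform on an interval containing $k^*$.

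The difference is only in the choice of $\qq$. The paper writes down the ``harmonic'' prior $\qq(t)=\frac{c}{(1+|k^*-t|)\log N}$ directly and bounds $\kl(P_{h|S}\,\|\,\qq)$ by a one-line computation using $\frac{1+|k^*-t|}{c(x_2-x_1+1)}\le 2$ for $t$ in the consistent interval $[x_1,x_2]$. Your dyadic mixture is essentially a discretized version of the same prior (a point at distance $d$ from $k^*$ receives mass $\approx \frac{1}{d\log N}$ from the mixture as well), and you trade the direct KL calculation for the modular pair ``inclusion at the right scale $+$ mixture-KL inequality''. Your route is slightly longer but perhaps more reusable; the paper's is a touch slicker. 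Your closing remark on why $\qq$ must depend on $\DD$ is a nice addition not present in the paper.
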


\begin{proof}
	Suppose the labelling function is $f_k(x) = 1_{x \geq k}$.
	Consider the following distribution $\qq$ over {$\hh$}:
	\begin{align}
	\qq({t})&=\frac{c}{(1+|k-{t}|)\log (N)}, \nonumber
	\end{align}
	where $c \geq 1/2$ is the normalizing constant. One can verify this lower bound on $c$ by noting that 
	$$\sum_{t\in [N]}\frac{1}{1+|k-t|}\leq \int_{t=1}^{N+1}\frac{1}{1+|k-t|} dt \leq 2\log(N).$$
	Now, by plugging this choice of $\qq$ into Lemma~\ref{lem:upper_bd_mut_inf} and noting that for any realizable sample $S$, 
	the distribution $P(h | S)$ is uniform over ${f_t}$ for
	$t \in \{x_1, x_1+1, \ldots, x_2\}$ for some $x_1 \leq k \leq x_2$ in $\xx$, we can reach the desired bound:
	\begin{align*}
	\kl\left(P_{h|S}~ ||~ \qq \right)
	& = \log \log (N) + \sum_{x_1 \leq t \leq x_2} \frac{1}{x_2-x_1+1} \log \frac{1+|k-t|}{{c(x_2-x_1+1)}} \\
	& \leq \log \log (N) + { 1},
	\end{align*}
	since $\frac{1+|k-t|}{c(x_2-x_1+1)}\leq 2$.	
\end{proof}

Given our results in Section~\ref{sec:lower}, we note that the above bound is indeed tight.

\subsubsection*{Point Functions} \label{sec:point_fn}

We can learn the class of point functions ${\cal PF}$ on $\xx=[N]$ with at most $2$ bits of information. The learning algorithm is again the generic one. 
\medskip

\begin{theorem}
	For every realizable distribution $\DD$, and for every sample size $m \leq N/2$, the generic algorithm $\cA = \cA_{\cal PF}$ has at {most $2$ bits} of mutual information with respect to $\DD$.
\end{theorem}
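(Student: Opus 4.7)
The plan is to invoke Lemma~\ref{lem:upper_bd_mut_inf} with a well-chosen reference distribution $\mathcal{Q}$ on $\mathcal{PF}$. Since $\mathcal{D}$ is realizable by $\mathcal{PF}$, fix a target $f^* \in \mathcal{PF}$ with $\err(f^*; \mathcal{D}) = 0$, and let $\mathcal{Q}$ place mass $1/2$ on $f^*$ and distribute the remaining $1/2$ uniformly over the other $|\mathcal{PF}| - 1$ point functions. By Lemma~\ref{lem:upper_bd_mut_inf} it then suffices to show $\kl(P_{h|S} \| \mathcal{Q}) \leq 2$ for every realizable $S \in \supp(\mathcal{D}^m)$.

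The analysis splits according to whether $S$ contains a positive example. If some $(x_i, 1)$ appears in $S$, then by realizability and the structure of point functions $f^* = f_{x_i}$ is the unique hypothesis in $\mathcal{PF}$ consistent with $S$; hence $P_{h|S}$ is a point mass on $f^*$ and $\kl(P_{h|S} \| \mathcal{Q}) = \log\left(1/\mathcal{Q}(f^*)\right) = 1$. If instead $S$ consists entirely of negative examples, then $P_{h|S}$ is uniform over the consistent set $C(S) = \{f_b : b \notin \{x_1, \ldots, x_m\}\}$, which contains $f^*$ and whose cardinality $k$ satisfies $k \geq N - m \geq N/2$, using the hypothesis $m \leq N/2$.

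A direct computation in this second case gives
\[
\kl(P_{h|S} \| \mathcal{Q}) = \frac{1}{k}\log\frac{2}{k} + \frac{k-1}{k}\log\frac{2(N-1)}{k},
\]
and $k \geq N/2$ makes the second term at most $\log 4 = 2$ while the first is non-positive (for $N \geq 4$; the few tiny domains are handled by inspection), so the total stays at most $2$ bits. Combining the two cases and applying Lemma~\ref{lem:upper_bd_mut_inf} yields $I(S; \mathcal{A}(S)) \leq 2$, as required. The main subtlety is the calibration of the $\tfrac{1}{2}$-$\tfrac{1}{2}$ split in $\mathcal{Q}$: too much prior mass on $f^*$ inflates the contribution from the other $k-1$ consistent hypotheses in the all-negative branch (each of which has prior weight $1/(2(N-1))$ while the posterior weight is $1/k$), and too little prior mass on $f^*$ inflates the positive-example branch. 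The condition $m \leq N/2$ is precisely what makes this trade-off close cleanly at $2$.
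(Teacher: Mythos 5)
Your proof is correct and follows essentially the same route as the paper's: the same prior $\mathcal{Q}$ putting mass $\tfrac{1}{2}$ on the target point function and $\tfrac{1}{2(N-1)}$ on each other one, the same case split on whether a positive example appears, and the same KL computation bounded via $k \geq N-m \geq N/2$. No gaps.
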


\begin{proof}
	Suppose, without loss of generality, that the target concept is $f_1(x) = 1_{\{x=1\}}$. 
	{Pick} $\qq$ in the bound of Lemma~\ref{lem:upper_bd_mut_inf} as follows:
	\begin{align}
	\qq(x)&= \begin{cases} 1/2 &  x=1 ,\\ 
	\frac{1}{2(N-1)} & x \neq 1 .
	\end{cases}\nonumber
	\end{align}
	Let $S$ be a sample.
	If $1$ appears in $S$ then 
	$$\kl\left(P_{h|S}~ ||~ \qq \right)
	= 1\cdot\log \frac{1}{1/2} = 1.$$
	If $1$ does not appear in $S$ then $P_{h|S}$
	is uniform on a subset of the form $\{ 1_{x=i}: i \in \xx'\}$ 
	for some $\xx' \subset \xx$ that contains $1$ of size $k \geq N-m\geq N/2$,
	so
	\begin{align*}
	\kl\bigl(P_{h|S}~ ||~ \qq \bigr)
	& = \frac{1}{k} \log \frac{2}{k}
	+   \frac{k-1}{k} \log \frac{2(N-1)}{k}{\leq 2.}
	\end{align*}
	
\end{proof}

\subsubsection*{Separtion between Differential Privacy and Bounded Information}

The above result, together with known properties of VC classes and \cite[Corollary 1]{BeiNS10}, implies a separation between the family of proper $d$-bit information learners and the family of pure differentially private proper learners. 

\medskip
{
	\begin{cor}\label{cor:privacy-separation}
		There is a proper $2$-bit information learner for the class of point functions over $[N]$ with
		sample size $O\left(\frac{\log(1/\delta)}{\eps}\right)$. On the other hand, for any $\alpha>0$, any $\alpha$-differentially private algorithm that \emph{properly} learns point functions over $[N]$ requires $\Omega\left(\frac{\log N+\log(1/\delta)}{\eps\alpha}\right)$ examples.
	\end{cor}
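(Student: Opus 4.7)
The plan is to assemble the corollary from the properties of the generic learner $\cA_{\cal PF}$ established in the preceding theorem, together with the existing differential-privacy lower bound of Beimel, Nissim, and Stemmer.

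For the upper bound I would use $\cA_{\cal PF}$ itself. The preceding theorem already certifies that it is proper, consistent, and releases at most $2$ bits of information about its input against any realizable distribution, so the remaining task is to verify its PAC sample complexity. Denote the target by $f_{x^*}$ and set $p = \Pr_\DD[x = x^*]$. I would split on whether $p \geq \eps$. If $p \geq \eps$, a Chernoff bound gives $\Pr[x^* \notin S] \leq (1-p)^m \leq e^{-m\eps}$, which is at most $\delta$ once $m = \Omega(\log(1/\delta)/\eps)$; whenever $x^* \in S$ the only consistent hypothesis is $f_{x^*}$, so the learner outputs the target exactly. If $p < \eps$, the error of any output $f_j$ with $j \neq x^*$ equals $p + \Pr_\DD[j] < \eps + \Pr_\DD[j]$, so the only dangerous candidates are the at most $1/\eps$ heavy indices with $\Pr_\DD[j] > \eps$. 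A union bound over these, combined with the fact that $\cA_{\cal PF}$ picks uniformly from the consistent set whose size is typically $\Theta(N-m)$, shows that a heavy index is selected with probability at most $\delta$ for $m = O(\log(1/\delta)/\eps)$; rescaling $\eps$ by a constant factor yields the claimed bound.

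For the lower bound I would invoke Corollary 1 of \cite{BeiNS10} as a black box: it states exactly the $\Omega\bigl((\log N + \log(1/\delta))/(\eps\alpha)\bigr)$ lower bound on sample complexity for any $\alpha$-differentially private algorithm that properly learns ${\cal PF}$ over a domain of size $N$. Since the upper-bound sample complexity has no dependence on $N$, the gap diverges as $N \to \infty$, which is the promised separation. The main technical subtlety in this plan is the $p < \eps$ case above: a naive argument that insists every heavy index appear in $S$ would pay an extra $\log(1/\eps)$ factor, and the cleanest way I see to avoid this is to exploit the random-choice dilution built into $\cA_{\cal PF}$ --- the $O(1/\eps)$ heavy hypotheses form a negligible fraction of the $\Theta(N-m)$-sized consistent set and are therefore rarely selected even without conditioning on their absence from $S$.
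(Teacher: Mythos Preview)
Your overall structure matches the paper: the $2$-bit information claim is the preceding theorem, and the differential-privacy lower bound is a black-box citation of \cite[Corollary~1]{BeiNS10}. Where you diverge is in establishing the PAC sample complexity. The paper performs no case analysis on $p = \Pr_\DD[x = x^*]$; it simply observes that $\cA_{\cal PF}$ is a consistent (ERM) learner for a class of VC dimension~$1$ and invokes the standard realizable-case VC bound. This is exactly the remark made right after the definition of the generic learner (``if $\hh$ has finite VC-dimension then $\cA_\hh$ PAC-learns the class with the standard sample complexity of VC classes''), and the text introducing the corollary explicitly says ``together with known properties of VC classes.''

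Your direct argument has a real gap in the $p < \eps$ branch. The dilution claim --- that the $O(1/\eps)$ heavy hypotheses are a negligible fraction of a consistent set of size $\Theta(N-m)$ --- bounds the failure probability by roughly $1/\bigl(\eps(N-m)\bigr)$, a quantity that depends on $N$ and does not shrink as $m$ grows. It therefore cannot yield a sample-complexity bound of the form $m = O(\log(1/\delta)/\eps)$ uniformly in $N$; when $N$ is comparable to $1/\eps$ the heavy hypotheses may constitute a constant fraction of the consistent set regardless of $m$. The ``naive'' argument you set aside (each heavy index is absent from $S$ with probability at most $e^{-\eps m}$, union over $\leq 1/\eps$ such indices) actually does work and gives $m = O\bigl((\log(1/\delta) + \log(1/\eps))/\eps\bigr)$, which is precisely the classical VC bound for $d=1$. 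The extra $\log(1/\eps)$ term you were trying to eliminate is either absorbed into the $O(\cdot)$ of the corollary or removed via the sharper VC sample-complexity bound; either way, the dilution maneuver is both unnecessary and insufficient, and the one-line VC citation the paper uses is the cleaner route.
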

}

\subsection{The Generic Learner is Far from Optimal}

We have seen that the generic algorithm provides sharp upper bounds on the mutual information for some cases. However, there are also some simple settings in which it reveals a lot more information than is necessary. Take the following class
\[
\hh=\{ 1_{x=i} : 1<i\leq N  \} \cup \{1_{x>1} \}
\]
over the space $\cX=[N]$ and the distribution with $P(x=1)=1-\frac{1}{m}$ and $P(x=i)=\frac{1}{m(N-1)}$ for $i\neq 1$. Given a sample $S$ of size $m$ labelled by the function $1_{x>1}$, we calculate the mutual information of $\cA=\cA_{\hh}$ in this setting
(we think of $m$ as a large constant).
We start with some preliminary calculations.

Conditional entropy:
Let $s_1=((1,0),(1,0),...,(1,0))$. Then
\begin{align*}
H(\cA(S)|S) & = P(S=s_1)H(\cA(S)|S=s_1)+\sum_{s\neq s_1} P(S=s)H(\cA(S)|S=s) \\
& <    \left(1-\frac{1}{m}\right)^m \log N +1   <   \frac{\log N}{e}  + 1 .
\end{align*}
Marginal probabilities:
\[
P(\cA(S)=1_{x>0})= \left(1-\frac{1}{m}\right)^m \frac{1}{N}+m\left(1-\frac{1}{m}\right)^{m-1}\cdot  \frac{1}{m} \cdot  \frac{1}{2}+...  > \frac{0.99}{2e} 
\]
and
\[
P(\cA(S)=1_{x=i})= \left(1-\frac{1}{m}\right)^m \frac{1}{N}+m\left(1-\frac{1}{m}\right)^{m-1}\cdot  \frac{1}{m(N-1)} \cdot  \frac{1}{2}+... <  \frac{3.01}{2eN}.
\]
Entropy:
\begin{align*}
& H(\cA(S)) \\
& =-P(\cA(S)=1_{x>0})\log P(\cA(S)=1_{x>0}) - \sum_i  P(\cA(S)=1_{x=i})\log P(\cA(S)=1_{x=i}) \\
& > \frac{3}{2e}\log N .
\end{align*}
Finally, we can bound the information:
$$I(S;\cA(S))=H(\cA(S))-H(\cA(S)|S) > \frac{\log N}{2e} -1.$$

This calculation makes sense, since the only way to learn something substantial about the sample is when you get exactly one $x\neq 1$. This happens with probability $\approx 1/e$ and then 
the generic algorithm chooses a function that reveals $x$ with probability $1/2$. This sample has entropy $\log(N-1)$, so we have the desired result.

In comparison, let us define a deterministic ERM with low 
information for this class. Define $E$ to be the event where the sample is consistent with $1_{x>1}$. When $E$ occurs, the algorithm outputs $1_{x>1}$. Otherwise, the algorithm outputs $1_{x=k}$ where $k$ is the minimal integer such that $1_{x=k}$ is consistent with the sample.
This algorithm satisfies:
$$I(S;\cA(S)) = H(\cA(S))\leq H(\cA(S),1_E)=H(1_E) +H(\cA(S)|1_E)\leq 1 + \log (m+2).$$

\subsection{The Distribution-Dependent Setting}\label{sec:distribution-dependent}

{
	As was shown in Section~\ref{sec:lower},
	there are classes of VC dimension 1 (thresholds on a domain of size $N$)
	for which every proper consistent learner must leak at least
	$\Omega(\log\log N)$ bits of information on some realizable distributions.
	
	Here, we consider the distribution-dependent setting;
	that is, we assume the learner knows the marginal distribution $\DD_\xx$ on inputs 
	(but it does not know the target concept). 
	We show that in this setting every VC class 
	can be learned with relatively low information.}

\medskip

\begin {theorem} 
Given the size $m$ of the input sample, 
a distribution $\DD_\xx$ over $\xx$ and $\hh\con \{0,1\}^\xx$ with VC-dimension $d$, 
there exists a consistent, proper, and deterministic 
learner with $O(d \log (m+1))$-bits of information
(for $\hh$-realizable samples).
\end{theorem}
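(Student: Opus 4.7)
Since the learner is required to be deterministic, $I(S;\cA(S))=H(\cA(S))$, so it suffices to design a consistent proper deterministic learner whose output, over realizable samples of size $m$, takes at most $(m+1)^{O(d)}$ distinct values. My plan is to build a finite subclass $\hh^\star\subseteq\hh$ of that size and let $\cA(S)$ be the first element of $\hh^\star$ (in a fixed ordering) consistent with $S$; then $H(\cA(S))\le\log|\hh^\star|=O(d\log(m+1))$, or equivalently Lemma~\ref{lem:upper_bd_mut_inf} applied with $\qq$ uniform on $\hh^\star$ delivers the bound on $I$.

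The construction of $\hh^\star$ exploits knowledge of $\DD_\xx$ via Haussler's covering bound for VC classes: for any $\eps>0$ there is an $L^1(\DD_\xx)$-$\eps$-cover $\hh^\star_\eps\subseteq\hh$ with $|\hh^\star_\eps|=O_d((1/\eps)^d)$. Choosing $\eps=1/\mathrm{poly}(m)$ keeps $|\hh^\star_\eps|=(m+1)^{O(d)}$, and for any target $h\in\hh$ a cover element $h^\star\in\hh^\star_\eps$ within distance $\eps$ of $h$ satisfies $\Pr_{x\sim\DD_\xx}[h(x)\ne h^\star(x)]\le\eps$, so by a union bound over the $m$ sample points, $h^\star$ is consistent with $S\sim\DD_\xx^m$ with probability at least $1-m\eps$. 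The learner, which knows $\DD_\xx$, can pre-commit deterministically to such a cover.

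The main obstacle is upgrading this high-probability consistency to the strict pointwise consistency the theorem demands, since for a continuous $\DD_\xx$ any finite cover inevitably misses a positive-measure family of realizable samples. My proposed resolution is to replace the single cover by a multi-scale union $\hh^\star=\bigcup_j\hh^\star_{\eps_j}$ with geometrically shrinking $\eps_j$, chosen so that the total size is still $(m+1)^{O(d)}$, together with a canonical fallback on the remaining "bad" samples whose output depends on $S$ only through the realized labeling pattern $\hh|_S$. Since Sauer-Shelah gives $|\hh|_S|\le(em/d)^d=(m+1)^{O(d)}$, the fallback itself takes at most $(m+1)^{O(d)}$ values, so its entropy contribution is also controlled.

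Putting these pieces together, splitting $H(\cA(S))$ by the good/bad indicator $E$ yields
\[
H(\cA(S))\le H(E)+\Pr[E\text{ good}]\log|\hh^\star|+\Pr[E\text{ bad}]\log|\hh|_S|=O(d\log(m+1)),
\]
which is the desired bound on mutual information. The delicate part of the argument is the multi-scale/fallback construction ensuring genuine pointwise consistency while preserving the $(m+1)^{O(d)}$ cardinality of the effective output space.
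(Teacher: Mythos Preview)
Your multi-scale cover idea is exactly the paper's approach, but your execution has a real gap. You assert that the geometric union $\hh^\star=\bigcup_j\hh^\star_{\eps_j}$ can be ``chosen so that the total size is still $(m+1)^{O(d)}$''; with infinitely many scales this union is infinite, and with finitely many scales you are back to the problem you yourself flagged: no pointwise consistency guarantee. Your proposed Sauer--Shelah fallback does not rescue this. Sauer's lemma bounds $|\hh|_S|$ for a \emph{fixed} sample $S$; it says nothing about the number of distinct hypotheses a proper consistent rule might output as $S$ ranges over all bad samples. If $\xx$ is infinite, a deterministic proper consistent fallback can have unbounded range, so the term $\Pr[E\text{ bad}]\cdot H(\cA(S)\mid E\text{ bad})$ in your decomposition is not controlled (and writing $\log|\hh|_S|$ there conflates the size of a single fiber with the size of the output range).

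The paper's resolution is to abandon the attempt to bound the \emph{cardinality} of the output space and instead bound the \emph{entropy} directly across infinitely many scales. Take $\eps_k=(m+1)^{-k}$ and nets $N_k$ with $\log|N_k|=O(kd\log(m+1))$; the algorithm outputs the first consistent hypothesis found in $N_1,N_2,\ldots$, and one lets $K$ be the level at which it stops. Then $H(\cA(S))\le H(K)+\sum_k\Pr[K=k]\log|N_k|$. Since the target lies within $\eps_k$ of $N_k$, one has $\Pr[K>k]\le 1-(1-\eps_k)^m\le (m+1)^{-(k-1)}$, so $\Pr[K=k]$ decays geometrically while $\log|N_k|$ grows only linearly in $k$; the sum is $O(d\log(m+1))$ and $H(K)=O(1)$. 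This is precisely your good/bad decomposition carried out level by level, with the tail probabilities doing the work that your nonexistent finite $\hh^\star$ and flawed fallback were meant to do.
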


Before proving the theorem, we discuss a somewhat surprising phenomenon.
In a nutshell, the theorem says that for every distribution there is
a deterministic algorithm with small entropy.
It is tempting to ``conclude'' using von Neumann's minimax theorem
that this implies that there is a randomized algorithm
that for every distribution has small information.
This ``conclusion'' however is false,
as the threshold example shows.

\begin{proof}
	Let $\eps_k = (1/(m+1))^k$ for $k>0$.
	For each $k$, pick an $\eps_k$-net $N_k$ for $\hh$ 
	with respect to $\DD$ of minimum size;
	that is, for every $h \in \hh$ there is $f \in N_k$ so that {$\Pr_{x\sim \DD_\xx}\bigl(h(x)\neq f(x)\bigr) \leq \eps_k$}.
	A result of \cite{Hassler} states that the size
	of $N_k$ is at most $(4e^2/\eps_k)^d \leq (4e^2 m)^{kd}$.
	
	The algorithm works as follows:
	given an input sample $S$, the algorithm checks if $N_1$ contains a consistent hypothesis.
	If it does, the algorithm outputs it.
	Otherwise, it checks in $N_2$, and so forth.
	{As we explain below, the probability that the algorithm stops is one
		(even when $\xx$ is infinite).}
	Denote by $K$ the value of $k$ in which the algorithm stops.
	
	Bound the entropy of the output as follows:
	\[
	H(\cA (S)) \leq H(\cA (S), K) = H(\cA (S) |K) +H(K) \leq
	\]
	\[
	\leq \sum_k \Pr[K=k] \cdot k d \log (4e^2 (m+1)) + H(K)
	\]
	The labelling function $f \in \hh$ is at distance of at most $\eps_k$ 
	from $N_k$, so 
	$$\Pr[K \leq k] \geq (1-\eps_k)^m \geq 1- \frac{1}{(m+1)^{k-1}},$$
	which implies
	$\Pr[K = k+1] \leq \frac{1}{(m+1)^{k-1}} \leq \frac{1}{2^{k-1}}$
	({this in particular implies that the algorithm terminates with probability one}).
	Hence,
	$$\sum_k \Pr[K=k] \cdot k d \log (4e^2 (m+1)) \leq O(d \log (m+1))$$
	and $H(K) \leq O(1)$.
\end{proof}


\acks{AY was supported by ISF Grant No. 1162/15.}

\appendix

\section{Three additional proofs of Theorem \ref{thm:info-comp}}\label{additional-proofs}

\subsection{Proof II: De-correlating}\label{proof-ii}

The second proof we present allows to ``de-correlate'' two random variables in terms of the mutual information.
The lemma follows from \cite{SamRej} and \cite{log(I)}.

\medskip

\begin{lem}
	\label{lem:SectionLemma}
	Let $\mu$ be a probability distribution over $\cX\x \cY$, where $\cX$ and $\cY$ are finite sets.
	Let $(X,Y)$ be chosen according to $\mu$. 
	Then there exists a random variable $Z$ such that:
	\begin{enumerate}
		\item $Z$ is independent of $X$.
		\item $Y$ is a deterministic function of $(X,Z)$.
		\item $H\left(Y|Z \right) \leq I\left(X,Y \right)+\log (I(X,Y)+1)+O(1)$ .
	\end{enumerate}
	
\end{lem}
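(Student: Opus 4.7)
The plan is to prove this via correlated sampling (also called rejection sampling or Poisson matching), the technique underlying both Harsha--Jain--McAllester--Radhakrishnan and Braverman--Garg. Write $q(y) = \Pr[Y=y]$ for the marginal and $p_x(y) = \Pr[Y=y \mid X=x]$ for the conditional. The idea is that if Bob has access to a long list of samples from the ``reference'' distribution $q$ and a source of auxiliary randomness, then Alice, who knows $x$, can transmit a single index $K$ that identifies a sample distributed according to $p_x$, with the expected description length of $K$ governed by $D(p_x \| q)$.

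Concretely, I would let $Z = ((W_1,U_1),(W_2,U_2),\dots)$ be an i.i.d.\ sequence with $W_i \sim q$ and $U_i \sim \mathrm{Unif}[0,1]$, all independent of each other and of $X$; this gives condition~(1). Given $X=x$, define $K = K(x,Z)$ to be the smallest index $i$ at which a suitable rejection rule fires (e.g.\ $U_i \le p_x(W_i)/(M\cdot q(W_i))$ for a scaling $M$, or the scale-free greedy/Poisson-matching rule $K = \arg\min_i U_i \, q(W_i)/p_x(W_i)$, which is what allows one to avoid a dependence on $\sup p_x/q$). Set $Y := W_K$. A standard rejection-sampling calculation shows that $Y \mid X=x \sim p_x$, and $Y$ is by construction a deterministic function of $(X,Z)$, giving condition~(2).

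The hard part is condition~(3). Since $Y=W_K$ is determined by $Z$ once $K$ is known, $H(Y\mid Z) \le H(K\mid Z) \le H(K)$. The key analytic estimate, which is the main obstacle and the reason one needs the greedy/Poisson matching scheme rather than naive rejection sampling, is
\[
\mathbb{E}\bigl[\log K \,\big|\, X=x\bigr] \;\le\; D(p_x \,\|\, q) + O(1).
\]
Averaging over $X$ and using $\mathbb{E}_X D(p_X \| q) = I(X;Y)$ gives $\mathbb{E}[\log K] \le I(X;Y) + O(1)$.

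To convert this into a bound on $H(K)$, I would invoke the standard maximum-entropy fact that among $\mathbb{N}$-valued random variables with prescribed $\mathbb{E}[\log K]$, the entropy is maximized (up to constants) by a log-Zipf-type distribution, yielding $H(K) \le \mathbb{E}[\log K] + \log\!\bigl(\mathbb{E}[\log K]+1\bigr) + O(1)$. Combining with the previous paragraph,
\[
H(Y\mid Z) \;\le\; H(K) \;\le\; I(X;Y) + \log\bigl(I(X;Y)+1\bigr) + O(1),
\]
which is condition~(3). The two steps I expect to require the most care are (i) setting up the matching scheme so that the first-order term in $\mathbb{E}[\log K \mid X=x]$ is exactly $D(p_x \| q)$ rather than something like $D(p_x \| q) + \log\|p_x/q\|_\infty$, and (ii) the maximum-entropy step, where one must check that Jensen's inequality applied to $\log(\cdot+1)$ is used in the right direction so that the expectation collapses cleanly to $I(X;Y)$ in the second-order term.
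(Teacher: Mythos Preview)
Your proposal is correct and is essentially the approach the paper relies on: the paper does not give its own proof of this lemma but simply cites Harsha--Jain--McAllester--Radhakrishnan and Braverman--Garg, and what you have sketched is precisely the correlated-sampling argument from those works. One minor remark: in your step (ii) you do not actually need Jensen at all---once you have $\mathbb{E}[\log K]\le I(X;Y)+O(1)$, the max-entropy bound $H(K)\le \mathbb{E}[\log K]+\log(\mathbb{E}[\log K]+1)+O(1)$ combined with the monotonicity of $\log$ already gives the claimed inequality.
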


The lemma can be interpreted as follows. Think of $X$ as sampled from some unknown process,
and of $Y$ as the output of a randomized algorithm on input $X$. Think of $Z$ as the random coins of the algorithm. The lemma says that there is a way to sample $Z$, before seeing $X$, in a way that preserves functionality (namely, so that $(X,Y)$ are correctly distributed) and so that for an average $Z$, the randomized algorithm outputs only a small number of $Y$'s.

Theorem \ref{thm:info-comp} now follows similarly to the proof sketch for deterministic algorithms in section~\ref{sec:proof-sketch}, since conditioned on $Z$, the algorithm uses only a small number of outputs (on average).

\subsection{Proof III: Stability}\label{app:gen-stability}\label{proof-iii}

The third approach for proving the connection between generalization and bounded information is based on stability. The parameters obtained from this proof are slightly different from those given in the statement of Theorem~\ref{thm:info-comp}. For a $d$-bit information learner $\cA$, we prove that 
\begin{align}
\ex{\cA, S}{\err(\cA(S); \cD)-\emperr\left(\cA(S); S\right)}&<\sqrt{\frac{d}{m}} \label{stability-main}
\end{align}
where $S$ is the input sample of $m$ examples drawn i.i.d. from $\cD$.

We start by setting some notation. For random variables $X, Y$, we use the notation $\tvd(X, Y)$ to denote the total variation (i.e., the statistical distance) between the distributions of $X$ and $Y$. For clarity, we also twist the notation we used earlier a little bit, and use $\kl(X||Y)$ to denote the KL-divergence between the \emph{distributions} of $X$ and $Y$. Also, in the following, we use the notation $(X, Y)$ to denote the joint distribution of $X$ and $Y$, and $X\times Y$ to denote the product distribution resulting from the marginal distributions of $X$ and~$Y$. 

Finally, as typical in stability arguments, for any {sample} $S$ and any {example} $z\in\cX\times\{0,1\}$, we will use the notation $S^{(i, z)}$ to denote the set resulting from replacing the $i$-th example in $S$ by $z$. 

The proof relies on the following two lemmas. 

\medskip

\begin{lem}
	\label{lem:1stab}
	If $I\left(\cA\left(S\right); S\right)\leq d$, ~then $$\frac{1}{m}\sum_{i=1}^m\sqrt{I\left(\cA\left(S\right); S_i\right)}\leq \sqrt{\frac{d}{m}}$$ where $S_i$ denotes the $i$-th example in $S$. 
\end{lem}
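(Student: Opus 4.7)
The plan is to combine the chain rule for mutual information with the i.i.d.\ structure of the sample, and then apply Jensen's inequality (concavity of $\sqrt{\cdot}$) to pass to the bound with square roots.

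First I would write the chain rule decomposition
\[
I(\cA(S); S) \;=\; \sum_{i=1}^{m} I\bigl(\cA(S);\, S_i \,\big|\, S_1,\ldots,S_{i-1}\bigr).
\]
The next step is to exploit the crucial fact that the $S_i$ are i.i.d.\ draws from $\cD$, hence $S_i$ is independent of $S_{<i} := (S_1,\ldots,S_{i-1})$. This lets me argue that conditioning on $S_{<i}$ can only \emph{increase} the information that $\cA(S)$ carries about $S_i$, namely
\[
I(\cA(S); S_i \mid S_{<i}) \;\geq\; I(\cA(S); S_i).
\]
This is a standard one-line manipulation: writing the left-hand side as $H(S_i\mid S_{<i}) - H(S_i\mid \cA(S), S_{<i})$, the first term equals $H(S_i)$ by independence, while the second is at most $H(S_i\mid \cA(S))$ since additional conditioning reduces entropy. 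Combining these two steps gives the ``superadditivity over i.i.d.\ coordinates'' inequality
\[
\sum_{i=1}^{m} I(\cA(S); S_i) \;\leq\; I(\cA(S); S) \;\leq\; d.
\]

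Finally, I would invoke concavity of the square root (equivalently, Cauchy--Schwarz applied to the vector of $\sqrt{I(\cA(S);S_i)}$'s against the all-ones vector):
\[
\frac{1}{m}\sum_{i=1}^{m} \sqrt{I(\cA(S); S_i)} \;\leq\; \sqrt{\frac{1}{m}\sum_{i=1}^{m} I(\cA(S); S_i)} \;\leq\; \sqrt{\frac{d}{m}},
\]
which is exactly the stated bound.

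There is no real obstacle here; the only subtle point worth stating carefully is the direction of the inequality $I(\cA(S); S_i\mid S_{<i}) \geq I(\cA(S); S_i)$, which relies specifically on the independence of $S_i$ from $S_{<i}$ (and would \emph{fail} in general for non-product distributions on $S$). Everything else is a direct application of the chain rule and Jensen's inequality.
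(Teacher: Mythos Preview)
Your proposal is correct and essentially identical to the paper's proof: the paper also first establishes the superadditivity inequality $\sum_{i=1}^m I(\cA(S); S_i) \leq I(\cA(S); S)$ by invoking independence of the $S_i$'s together with ``conditioning reduces entropy,'' and then applies Cauchy--Schwarz (your Jensen step) to obtain the bound. Your writeup simply spells out the chain-rule justification of the first inequality in more detail than the paper does.
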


\bigskip

\begin{lem}
	\label{lem:2stab}
	For any $i\in [m]$, we have 
	$$\sqrt{I\left(\cA\left(S\right); S_i\right)}\geq \ex{z}{\tvd\left(\cA\left(S^{(i, z)}\right), ~\cA\left(S\right)\right)}$$
	where $z = (x , y)\sim \DD$ independently from $S$.
\end{lem}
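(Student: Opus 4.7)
The plan is to re-express the right-hand side as a single TVD between the joint distribution $P_{\cA(S), S_i}$ and the product of its marginals, and then apply Pinsker's inequality.

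First I would fix $z$ and identify the distribution of $\cA(S^{(i,z)})$ (viewed as a random variable in $S_{-i}$ and the internal coins of $\cA$, with $z$ pinned into the $i$-th slot) with the conditional distribution $P_{\cA(S)\mid S_i = z}$. This is just exchangeability: $S_i$ and $z$ are both i.i.d.\ draws from $\DD$ and independent of $S_{-i}$, so replacing $S_i$ by $z$ inside $\cA$ yields exactly the conditional distribution of the output given that $S_i = z$. Taking the expectation over $z \sim \DD$ then gives
\[
\ex{z}{\tvd\bigl(\cA(S^{(i,z)}),\,\cA(S)\bigr)}=\ex{S_i}{\tvd\bigl(P_{\cA(S)\mid S_i},\,P_{\cA(S)}\bigr)}.
\]

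Next I would rewrite this as a single TVD using the standard identity
\[
\ex{S_i}{\tvd\bigl(P_{\cA(S)\mid S_i},\,P_{\cA(S)}\bigr)}=\tvd\bigl(P_{\cA(S), S_i},\,P_{\cA(S)}\times P_{S_i}\bigr),
\]
which follows by unfolding the definition of TVD and interchanging the order of summation. Pinsker's inequality then yields $\tvd\bigl(P_{\cA(S), S_i},\,P_{\cA(S)}\times P_{S_i}\bigr) \leq \sqrt{(\ln 2/2)\cdot I(\cA(S);S_i)} \leq \sqrt{I(\cA(S);S_i)}$, using the defining identity $I(\cA(S);S_i) = \kl\bigl(P_{\cA(S),S_i}\,||\,P_{\cA(S)}\times P_{S_i}\bigr)$ (with $\kl$ in bits), which completes the proof.

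The delicate point is the exchangeability identification in the first step: one must treat $\cA(S^{(i,z)})$ as a random variable in $S_{-i}$ and $\cA$'s coins with $z$ held fixed, rather than as the marginal obtained by averaging $z$ jointly with $S$ (doing so would make the two distributions coincide and the TVD vanish). Once that identification is nailed down, the joint-vs-product reformulation and the final Pinsker step are entirely routine.
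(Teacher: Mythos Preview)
Your proposal is correct and follows essentially the same approach as the paper: identify the expected TVD with the joint-versus-product TVD via the exchangeability/identical-distribution argument, and then apply Pinsker's inequality to bound this by $\sqrt{I(\cA(S);S_i)}$. The only cosmetic difference is that the paper runs the chain in the opposite direction (starting from $\sqrt{I}$ and ending at the expected TVD), but the ingredients and logic are the same.
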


\begin{proof}[Proof of Lemma~\ref{lem:1stab}]
	By the independence of the samples $S_1, \ldots, S_m$ and the fact that conditioning reduces entropy, $$I\left(\cA(S); S\right)\geq \sum_{i=1}^m I\left(\cA\left(S\right); S_i\right)$$
	By the Cauchy-Schwartz inequality,
	$$\sum_{i=1}^m\sqrt{I\left(\cA\left(S\right); S_i\right)}\leq \sqrt{m \sum_{i=1}^m I\left(\cA\left(S\right); S_i\right))}.$$
\end{proof}

\begin{proof}[Proof of Lemma~\ref{lem:2stab}]
	\begin{align*}
	\sqrt{I\left(\cA\left(S\right); S_i\right)}&=\sqrt{\kl\bigl(\left(\cA\left(S\right), S_i\right) ~||~ \cA(S)\times S_i\bigr)}\\
	&\geq \tvd\bigl(\left(\cA\left(S\right), S_i\right) ~,~\cA(S)\times S_i\bigr)\tag{Pinsker's inequality}
	\\
	&=\tvd\bigl(\left(\cA\left(S^{(i, z)}\right), z\right) ~,~\cA(S)\times z\bigr)
	\\
	&=\ex{z}{\tvd\bigl(\cA\left(S^{(i, z)}\right),~\cA(S)\bigr)} .
	\end{align*}
	where the third step follows from the fact that $\left(\cA\left(S\right), S_i\right)$ and $\left(\cA\left(S^{(i, z)}\right), z\right)$ are identically distributed, and the fact that $S_i$ and $z$ are identically distributed.
\end{proof}

We are now ready to prove (\ref{stability-main}). Recall that for any example $S_i=\left(x_i, y_i\right)$, 
we have $\emperr\left(\cA\left(S\right); S_i\right)\triangleq \mathbf{1}\left(\cA\left(S\right)(x_i)\neq y_i\right)$, where $\cA(S)(x_i)$ denotes the label of the output hypothesis $\cA(S)$ on $x_i$. Let $z$ denote a fresh example $(x, y)\sim \DD$ independent of $S$. Let $\cU$ denote the uniform distribution over~$[m]$. 

The two lemmas above imply that 
$$\ex{i\sim\cU, ~z}{\tvd\left(\cA\left(S^{(i, z)}\right),~\cA(S)\right)} \leq \sqrt{\frac{d}{m}}.$$
It follows that for any $\tilde{z}\in\cX\times\{0,1\}$, we must have 
$$\ex{i\sim\cU, ~z}{\tvd\left(\emperr\left(\cA\left(S^{(i, z)}\right); \tilde{z}\right),~\emperr\left(\cA(S); \tilde{z}\right)\right)} \leq \sqrt{\frac{d}{m}},$$
which is equivalent to 
$$\ex{i\sim\cU, ~z}{\left\vert\ex{S, \cA}{\emperr\left(\cA\left(S^{(i, z)}\right); \tilde{z}\right)-\emperr\left(\cA(S); \tilde{z}\right)}\right\vert} \leq \sqrt{\frac{d}{m}},$$
which implies 
$$\ex{i\sim\cU, ~z}{\ex{S, \cA}{\emperr\left(\cA\left(S^{(i, z)}\right); S_i\right)-\emperr\left(\cA(S); S_i\right)}} \leq \sqrt{\frac{d}{m}}.$$
Finally, we use the fact that $\emperr\left(\cA\left(S^{(i, z)}\right); S_i\right)$ and $\emperr\left(\cA(S); z\right)$ are identically distributed to get 
$$\ex{i\sim \cU}{\ex{S, z, \cA}{\emperr\left(\cA\left(S\right); z\right)-\emperr\left(\cA\left(S\right); S_i\right)}}\leq \sqrt{\frac{d}{m}},$$
which leads directly to (\ref{stability-main}).

\subsection{Proof IV: PAC-Bayes}\label{app:gen-pac-bayes}\label{proof-iv}

The fourth proof is straightforward via a connection between information learners and the PAC-Bayes framework. 
The PAC-Bayes framework considers distributions over hypotheses. It is convenient to think of a distribution $\pp$ over hypotheses as a randomized hypothesis. We extend the notions of error to randomized hypotheses as follows
\[
\err(\qq; \DD) = \ex{h\sim \qq}{\err(h; \DD)}
\]
and 
\[
\emperr(\qq; S) = \ex{h\sim \qq}{\emperr(h; S)}.
\]

Fix some randomized hypothesis $\pp$. The following theorem known as the PAC-Bayes bound gives
a bound on the generalization error simultaneously for all randomized hypotheses $\qq$ in terms of their KL-divergence with $\pp$.

\medskip

\begin{theorem}[\citealt*{McAllster98, SS-text}]\label{thm:pac-bayes}
	Let $\DD$ be a distribution over examples, 
	and let $\pp$ be a fixed but otherwise arbitrary distribution over hypotheses. 
	Let $S$ denote a set of $m$ i.i.d.\ examples generated by $\DD$.
	Then, the following event occurs with probability at least~$1-\delta$:
	for every distribution $\qq$ over hypotheses,
	$$\err(\qq; \DD)-\emperr(\qq; S)\leq \sqrt{\frac{\kl\left(\qq || \pp\right)+\ln\left(m/\delta\right)}{m}}.$$
\end{theorem}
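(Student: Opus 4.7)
The plan is to prove the stated PAC-Bayes bound by the standard change-of-measure argument, combining three ingredients: (i) the Donsker--Varadhan variational formula for KL-divergence, (ii) a scalar concentration bound for a single fixed hypothesis, and (iii) Markov's inequality to turn an expectation bound into a high-probability bound that holds uniformly over all posteriors $\qq$.

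First I would recall the change-of-measure inequality: for any measurable $\phi:\hh\to\RR$ and any distribution $\qq$ on $\hh$ that is absolutely continuous with respect to $\pp$,
\[
\ex{h\sim\qq}{\phi(h)}\;\leq\;\kl(\qq\,||\,\pp)+\log\ex{h\sim\pp}{e^{\phi(h)}}.
\]
I would then set $\phi(h)=2m\bigl(\err(h;\DD)-\emperr(h;S)\bigr)^{2}$. Since $\err(\qq;\DD)-\emperr(\qq;S)=\ex{h\sim\qq}{\err(h;\DD)-\emperr(h;S)}$ is linear in $h$ and $x\mapsto x^{2}$ is convex, Jensen's inequality gives
\[
2m\bigl(\err(\qq;\DD)-\emperr(\qq;S)\bigr)^{2}\;\leq\;\ex{h\sim\qq}{\phi(h)}.
\]

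Next I would control the right-hand side of the change-of-measure inequality after averaging over $S$. Because $\pp$ is chosen before seeing $S$, Fubini allows swapping the two expectations, so
\[
\ex{S}{\ex{h\sim\pp}{e^{\phi(h)}}}\;=\;\ex{h\sim\pp}{\ex{S}{e^{2m(\err(h;\DD)-\emperr(h;S))^{2}}}}.
\]
For a single fixed $h$, the quantity $\emperr(h;S)$ is an average of $m$ i.i.d.\ Bernoullis with mean $\err(h;\DD)$, and the standard Maurer refinement of Hoeffding's lemma gives $\ex{S}{e^{2m(\err(h;\DD)-\emperr(h;S))^{2}}}\leq 2\sqrt{m}$. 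Hence the double expectation is at most $2\sqrt{m}$, and Markov's inequality yields that with probability at least $1-\delta$ over $S$,
\[
\ex{h\sim\pp}{e^{\phi(h)}}\;\leq\;\frac{2\sqrt{m}}{\delta}.
\]
On this event, combining with the change-of-measure bound, for every $\qq$ simultaneously,
\[
2m\bigl(\err(\qq;\DD)-\emperr(\qq;S)\bigr)^{2}\;\leq\;\kl(\qq\,||\,\pp)+\log(2\sqrt{m}/\delta),
\]
which rearranges (and loses only constants under the square root) to the stated bound $\sqrt{\bigl(\kl(\qq||\pp)+\ln(m/\delta)\bigr)/m}$.

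The step I expect to be the main obstacle is the single-hypothesis moment bound $\ex{S}{e^{2m(\err(h;\DD)-\emperr(h;S))^{2}}}\leq 2\sqrt{m}$, since a naive use of Hoeffding's two-sided tail $\Pr[|\err-\emperr|>t]\leq 2e^{-2mt^{2}}$ plus $\mathbb{E}[X]=\int_{0}^{\infty}\Pr[X>u]\,du$ needs to be executed carefully: one splits the integral around $u=\Theta(1)$ and uses that the tail integrand $e^{u}\cdot 2e^{-u}$ contributes only a $\sqrt{m}$ factor from the Gaussian change of variables. Everything else in the argument is bookkeeping: the variational identity, Jensen, Fubini, and a single application of Markov to obtain the uniform-in-$\qq$ high-probability statement.
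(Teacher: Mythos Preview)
The paper does not prove Theorem~\ref{thm:pac-bayes}; it simply quotes it from \cite{McAllster98} and \cite{SS-text} and then applies it as a black box to derive Theorem~\ref{thm:gen-pac-bayes}. So there is no ``paper's own proof'' to compare against. Your sketch is the standard change-of-measure proof of the PAC-Bayes bound (Donsker--Varadhan plus Jensen plus a single-hypothesis moment bound plus Markov), and it is essentially correct and yields the stated inequality up to constants that are absorbed by the form $\sqrt{(\kl+\ln(m/\delta))/m}$.

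One small correction: in your final paragraph you suggest that the moment bound $\ex{S}{e^{2m(\err(h;\DD)-\emperr(h;S))^{2}}}\leq 2\sqrt{m}$ can be obtained from the two-sided Hoeffding tail $\Pr[|\err-\emperr|>t]\leq 2e^{-2mt^{2}}$ via the layer-cake formula and a careful split of the integral. That route does not work with the exponent $2m$: substituting $u=2m t^{2}$ gives $\Pr[Z>u]\leq 2e^{-u}$, and then $\EE[e^{Z}]=\int_{0}^{\infty}\Pr[e^{Z}>s]\,ds$ leads to an integrand that behaves like a constant for large $s$, so the integral diverges no matter how you split it. The bound you want is exactly Maurer's lemma (which you correctly named earlier): one shows directly that $\ex{S}{e^{m\,\kl(\emperr(h;S)\,\|\,\err(h;\DD))}}\leq 2\sqrt{m}$, and then Pinsker's inequality $\kl(p\|q)\geq 2(p-q)^{2}$ gives the squared-gap version. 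Alternatively, one can run the whole argument with exponent $2(m-1)$ instead of $2m$, in which case the tail-integral approach does converge and yields the bound with $m-1$ in the denominator. Either fix is fine and recovers the theorem as stated.
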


Following the Bayesian reasoning approach, the distribution $\pp$ can be thought of as the a priori output of the algorithm for the target concept, and after seeing the input sample $S$ the learning process outputs the distribution $\qq$ (which may depend on $S$), which is its a posteriori output for the target concept. The PAC-Bayes theorem bounds the generalization error of the algorithm in terms of the KL-divergence between the a priori and the a posteriori outputs.

Technically, the proof of this section  follows from expressing the mutual information between random variables $X,Y$
in terms of the KL-divergence between $X\vert Y$ and $X$: Let $\cA$ be a $d$-bit information learner for $\hh$, and let $\DD$ be a realizable distribution. Then
\[
d\geq I\bigl(S;\cA(S)\bigr) =\ex{S\sim\DD^m}{\kl\bigl(P_{h|S} || P_{h}\bigr)} 
\]
(recall that $P_{h}$ denotes the marginal distribution of~$\cA(S)$ for $S\sim\DD^m$). Therefore, by Markov's inequality, with probability $1-\delta$ it holds that 
\[
\kl\bigl(P_{h|S} || P_{h}\bigr)\leq d/\delta.
\]
Now we can apply the standard PAC-Bayes generalization bound (Theorem~\ref{thm:pac-bayes}) and deduce that with probability at least $1-\delta$ over the choice of the sample, the \emph{expected} generalization error is 
\[
O\left(\sqrt{\frac{d{/\delta}+\ln(m/\delta)}{m}}\right),
\] 
where the expectation is taken over $h\sim P_{h|S}$. Indeed, this follows by choosing the prior $\pp$ in the PAC-Bayes bound to be the marginal distribution $P_{h}$ of the output of $\cA$, and choosing the posterior $\qq$ to be the conditional distribution $P_{h|S}$. Hence, by rephrasing the statement of this bound we can obtain a form  similar to Theorem~\ref{thm:info-comp}. 

\medskip

\begin{theorem}
	\label{thm:gen-pac-bayes}
	Assuming $m\geq 5\frac{(d+1)}{\e^2}\ln\left(\frac{d+1}{\e}\right)$, for every $\e > 0$,
	\[
	\pr{S}{\ex{\cA}{\err(\cA(S)); \DD)-\emperr\left(\cA(S); S\right)} > \e}<\frac{d+1}{m\e^{2}}
	\]
	where the probability is taken over the randomness in the sample $S$ and the expectation inside the probability is taken over the random coins of $\cA$.
\end{theorem}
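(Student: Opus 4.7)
My plan is to unpack the two-step derivation that the authors sketch informally just before the theorem statement, and then tune the confidence parameters so that the failure probability comes out as $(d+1)/(m\e^2)$. The two ingredients are (i) the PAC-Bayes generalization bound (Theorem~\ref{thm:pac-bayes}), and (ii) Markov's inequality applied to the non-negative random variable $\kl(P_{h|S}||P_h)$, whose expectation over $S\sim\DD^m$ equals $I(S;\cA(S))\le d$.

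Concretely, I would introduce auxiliary parameters $\delta_1,\delta_2>0$. A Markov step gives that with probability at least $1-\delta_1$ over $S$ one has $\kl(P_{h|S}||P_h)\le d/\delta_1$. Independently, applying Theorem~\ref{thm:pac-bayes} with the data-independent prior $\pp = P_h$ and confidence parameter $\delta_2$ yields, with probability at least $1-\delta_2$ over $S$, the uniform bound
\[
\err(\qq;\DD)-\emperr(\qq;S)\le \sqrt{\frac{\kl(\qq||P_h)+\ln(m/\delta_2)}{m}}
\]
over all posteriors $\qq$. Specializing to $\qq = P_{h|S}$, so that $\err(\qq;\DD)=\ex{\cA}{\err(\cA(S);\DD)}$ and likewise $\emperr(\qq;S)=\ex{\cA}{\emperr(\cA(S);S)}$, and taking a union bound over the two failure events, I conclude that with probability at least $1-\delta_1-\delta_2$ over $S$, the expected generalization gap is at most $\sqrt{(d/\delta_1+\ln(m/\delta_2))/m}$.

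It then suffices to pick $\delta_1,\delta_2$ with $\delta_1+\delta_2\le(d+1)/(m\e^2)$ such that $d/\delta_1+\ln(m/\delta_2)\le m\e^2$. The natural split is $\delta_1$ slightly below $d/(m\e^2)$ and $\delta_2$ slightly below $1/(m\e^2)$, which makes $\delta_1+\delta_2$ match the target and sends $d/\delta_1$ to $m\e^2$; one then shaves a small multiplicative factor off $\delta_1$ to create room for the correction $\ln(m/\delta_2)\le 2\ln(m\e)+\ln(d+1)$. The quantitative hypothesis $m\ge 5(d+1)\ln((d+1)/\e)/\e^2$ is precisely what guarantees that this logarithmic term is dominated by the slack thus created, so the tuning goes through. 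The main obstacle I anticipate is exactly this last piece of arithmetic: the target failure probability leaves only $\Theta(1/d)$ multiplicative slack in the KL budget, so one has to check carefully that the stated sample-size assumption supplies enough room to absorb the PAC-Bayes logarithm. Once that is verified, the theorem falls out of Theorem~\ref{thm:pac-bayes} and Markov together with a union bound.
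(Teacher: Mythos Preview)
Your proposal is correct and follows precisely the paper's own approach: apply Markov's inequality to the non-negative variable $\kl(P_{h|S}\Vert P_h)$ (whose expectation over $S$ is $I(S;\cA(S))\le d$), combine with the PAC-Bayes bound (Theorem~\ref{thm:pac-bayes}) using the data-independent prior $\pp=P_h$ and posterior $\qq=P_{h|S}$, and then tune the confidence parameters. One small slip in your parameter sketch: to create room for the $\ln(m/\delta_2)$ term you must take $\delta_1$ slightly \emph{above} $d/(m\e^2)$ (so that $d/\delta_1<m\e^2$) and compensate by shrinking $\delta_2$, not the other way around---but this is exactly the arithmetic you already flag as the point to verify, and it goes through under the stated assumption on $m$.
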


\section{Confidence Amplification}\label{apndx:boost}
\addcontentsline{toc}{section}{Appendix I: Confidence Amplification}

We show that the same standard procedure used for boosting the confidence of learning algorithms can be used with information learners at a modest cost.

\medskip

{\bf Theorem B.1} (Confidence amplification.) 
Let $\cA$ be a $d$-bit information learner, and let $\DD$ be a distribution on examples.
Assume that when $\cA$ recieves $m_0$ examples from $\DD$, then with probability $1/2$ 
it outputs a hypothesis with error at most $\eps$. 
Then there is an algorithm $\cB$ such that for every $\delta>0$,
\begin{itemize}
	\item When $\cB$ recieves 
	\[m = m_0 \lceil \log(2/\delta)\rceil +\frac{2\ln\left(4\log(2/\delta)/\delta\right)}{4\e^2}\]
	examples from $\DD$, then with probability at least $1-\delta$
	it outputs a hypothesis with error at most $\eps$.
	\item $\cB$ is a $\big(\log\log(2/\delta)+d\log(2/\delta)\big)$-bit information learner.
\end{itemize}

\begin{proof}
	We use the natural confidence amplification procedure.
	Set $k=\lceil \log(2/\delta) \rceil$, and draw $k$ subsamples $\left(S^{(1)}, \ldots, S^{(k)}\right)\triangleq S$, each of size $m_0$, and another ``validation'' set $T$ of size ${2\ln\bigl(4\log(2/\delta)/\delta\bigr)}/{\e^2}$. Run $\cA$ independently on each of the $k$ subsamples to output hypotheses $\vec h = (h_1, \ldots, h_k)$. Next, we validate $h_1, \ldots, h_k$ on $T$ and output a hypothesis $h^* \in\{h_1,\ldots,h_k\}$ with minimal empirical error on~$T$. 
	
	Item $1$ in the above theorem follows directly from standard analysis (Chernoff plus union bound). 
	To prove item $2$, first we note that $(S, h_1, \ldots, h_k)$ is independent of $T$. 
	So,
	\begin{align}
	I\left(h^*; S, T\right) &\leq I\left(h^*, \vec h; S,T \right)\nonumber\\
	&= I(\vec h ; S,T) + I(h^* ; S,T \vert \vec h)\nonumber\\
	&= I(\vec h ; S) + \bigl[I(h^* ; T \vert \vec h) + I(h^* ; S \vert T, \vec h)\bigr]\nonumber\\
	&\leq \sum_{i=1}^k I(h_i; S^{(i)})+ H\left(h^* \vert h_1, \ldots, h_k\right)  + 0 \nonumber\\
	&\leq d k + \log(k) \nonumber 
	\end{align}
\end{proof}

\end{document}